\documentclass{article}
\usepackage[utf8]{inputenc}

\usepackage{arxiv}
\usepackage{times}

\usepackage{amsmath}
\usepackage{amssymb}
\usepackage{amsthm}
\usepackage{mathtools}
\usepackage{booktabs}
\usepackage{enumitem}
\usepackage{xcolor}
\usepackage{hyperref}
\usepackage[numbers]{natbib}
\usepackage{url}
\usepackage{wrapfig}
\usepackage{graphicx}

\newtheorem{theorem}{Theorem}
\newtheorem{proposition}{Proposition}
\newtheorem{corollary}{Corollary}

\definecolor{hl-blue}{RGB}{0, 0, 180}
\definecolor{hl-green}{RGB}{0, 128, 0}
\definecolor{hl-red}{RGB}{196, 0, 0}
\hypersetup{
    colorlinks=true,       
    linkcolor=hl-red,      
    citecolor=hl-blue,     
    filecolor=magenta,     
    urlcolor=hl-green         
}

\providecommand{\cref}[1]{\ref{#1}}
\providecommand{\Cref}[1]{\ref{#1}}
\usepackage[acronym, nohypertypes={acronym}]{glossaries}
\glsdisablehyper

\newacronym{tgp}{\texttt{tgp}}{Torch Geometric Pool}
\newacronym{pyg}{PyG}{Pytorch Geometric}
\newacronym{dgl}{DGL}{Deep Graph Library}
\newacronym{gsp}{GSP}{Graph Signal Processing}
\newacronym{src}{SRC}{Select-Reduce-Connect}
\newacronym{srcl}{SRCL}{Select-Reduce-Connect-Lift}
\newacronym{nmi}{NMI}{Normalized Mutual Information}

\newacronym{cnn}{CNN}{Convolutional Neural Network}
\newacronym{gnn}{GNN}{Graph Neural Network}
\newacronym{mlp}{MLP}{Multilayer Perceptron}
\newacronym{gcn}{GCN}{Graph Convolutional Network}
\newacronym{gin}{GIN}{Graph Isomorphism Network}
\newacronym{snn}{SNN}{Sheaf Neural Network}

\newacronym{hisp}{HiSP}{Hierarchical Sheaf Pool}
\newacronym{nsd}{NSD}{Neural Sheaf Diffusion}
\newacronym{mp}{MP}{Message Passing}
\newacronym{ogcn}{$\omega$-GCN}{$\omega$-Graph Convolution Network}
\newacronym{hetmp}{HetMP}{Heterophilic Message Passing}

\newacronym{hosc}{HOSC}{High-Order Spectral Clustering}
\newacronym{asap}{ASAP}{Adaptive Structure Aware Pooling}
\newacronym{jbgnn}{JBPool}{Just-Balance Pooling}
\newacronym{ecpool}{ECPool}{Edge-Contraction Pooling}
\newacronym{ndp}{NDP}{Node Decimation Pooling}
\newacronym{nmf}{NMF}{Non-negative Matrix Factorization pooling}
\newglossaryentry{topk}{name=Top-$k$,description=}
\newacronym{dmon}{DMoN}{Deep Modularity Networks}
\newacronym{acc}{ACC}{Asymmetric Cheeger Cut pooling}
\newacronym{kmis}{$k$-MIS}{$k$-Maximal Independent Sets pooling}
\newglossaryentry{mincut}{name=MinCut,description=}
\newglossaryentry{maxcut}{name=MaxCut,description=}
\newacronym{tvgnn}{TVGNN}{Total Variation Graph Neural Network}
\newacronym{pan}{PAN}{Path integral based pooling}
\newacronym{sag}{SAG}{Self-Attention Graph pooling}
\newacronym{bnpool}{BNPool}{Bayesian Nonparametric Pooling}
\newglossaryentry{graclus}{name=GraClus,description=}
\newacronym{lapool}{LaPool}{Laplacian Pooling}
\newacronym{sep}{SEP}{Structural Entropy Pooling}
\newglossaryentry{eigen}{name=EigenPool,description=}
\newglossaryentry{diff}{name=DiffPool,description=}
\newglossaryentry{nopool}{name=NoPool,description=}

\newacronym{ogb}{ogbg-ppa}{protein-protein association networks}
\newacronym{gcb}{GCB-H}{Graph Classification Benchmark Hard}
\newacronym{lrgb}{LRGB}{Long-Range Graph Benchmark}

\let\origgls\gls
\let\origacrshort\acrshort
\let\origacrlong\acrlong
\let\origacrfull\acrfull
\renewcommand{\gls}[1]{\ifglsentryexists{#1}{\origgls{#1}}{\texttt{#1}}}
\renewcommand{\acrshort}[1]{\ifglsentryexists{#1}{\origacrshort{#1}}{\texttt{#1}}}
\renewcommand{\acrlong}[1]{\ifglsentryexists{#1}{\origacrlong{#1}}{\texttt{#1}}}
\renewcommand{\acrfull}[1]{\ifglsentryexists{#1}{\origacrfull{#1}}{\texttt{#1}}}

\title{Hierarchical Pooling for Sheaf Neural Networks}

\author{%
  Dionisia Naddeo\thanks{Equal contribution.}  \\
  Department of Biomedicine and Prevention\\ University of Rome, Tor Vergata 
  \And
  Carlo Abate\textsuperscript{*} \\
  UiT The Arctic University of Norway
  \And
  Pietro Liò \\
  University of Cambridge
  \And
  Nicola Toschi \\
  Department of Biomedicine and Prevention\\ University of Rome, Tor Vergata\\
    Martinos Center For Biomedical Imaging\\
    MGH and Harvard Medical School (USA)
  \And
  Filippo Maria Bianchi \\
  UiT The Arctic University of Norway\\
  NORCE Norwegian Research Centre AS
}

\begin{document}

\maketitle

\begin{abstract}
\Glspl{snn} generalize \glspl{gnn} by replacing scalar node signals with stalk-valued signals and by using restriction maps to measure compatibility across edges. Unlike standard graph diffusion, which encourages neighboring node features to become similar, sheaf diffusion promotes consistency through the restriction maps and can therefore model more general relationships between neighboring nodes. However, existing sheaf neural architectures mainly operate at a fixed graph resolution and do not provide a principled pooling mechanism for building hierarchical representations. In this paper, we introduce \gls{hisp}, a sheaf-aware pooling framework based on local spectral coarsening. Given a partition of the graph, \gls{hisp} constructs each coarse stalk by projecting fine stalk-valued features onto the low-frequency eigenmodes of the cluster-internal sheaf Laplacian. These local modes define a cochain-level prolongation map, which allows the fine sheaf energy to be represented on the coarse space through a Galerkin operator. We further analyze the approximation induced by coarsening by separating truncation loss, due to discarded local modes, from realization loss, due to representing the projected operator as a coarse sheaf. Finally, we implement \gls{hisp} as a \gls{gnn} pooling layer compatible with \glspl{snn} and provide a \gls{pyg} implementation supporting batching, lifted sheaf Laplacians, and hierarchical architectures.
\end{abstract}

\section{Introduction}

\glspl{gnn} typically rely on \gls{mp} or diffusion operators derived from the graph Laplacian \citep{kipf2016semi}. This induces a notion of smoothness in which neighboring nodes are encouraged to carry similar representations, an assumption that is well suited to homophilic graphs but can be restrictive in more complex relational settings. Cellular sheaves provide a principled way to enrich this model. Rather than attaching a single common feature space to all nodes, a sheaf assigns local vector spaces to vertices and edges, together with restriction maps that define how neighboring data should be compared \citep{hansen2019toward}. The resulting sheaf Laplacian measures incompatibility between local representations after these maps are applied, thereby generalizing graph Laplacian smoothness to a learned notion of local consistency. This idea has recently been brought into graph learning through \glspl{snn} \citep{hansen2020sheaf} and \gls{nsd} \citep{bodnar2022neural}, where diffusion is performed over stalk-valued signals and the underlying compatibility maps can be learned from data. As a result, sheaf-based models offer a flexible framework for heterophilic graphs, signed or asymmetric relations, and settings where direct feature similarity is not the right inductive bias. Subsequent work has expanded this perspective by studying how to constrain, learn, and extend sheaf structures through connection Laplacians \citep{barbero2022sheaf}, attention mechanisms \citep{barbero2022sheafatt}, polynomial sheaf filters that generalize spectral graph filtering to sheaf Laplacians \citep{borgi2025polynomial}, joint diffusion processes \citep{caralt2024joint}, and cooperative diffusion \citep{ribeiro2025cooperative}.

Despite these advances, current architectures remain limited in several important ways. Most existing work has focused on designing operators at a fixed graph resolution, rather than on constructing hierarchical representations of stalk-valued signals. This is a significant gap: in standard graph learning, pooling and coarsening are essential for graph-level tasks and for building multi-resolution architectures \citep{ying2018hierarchical, grattarola2022understanding}. These methods reduce graph size, enlarge the effective receptive field, and construct progressively more abstract representations without relying only on repeated propagation at the original graph resolution. While some pooling methods are designed for heterophilic graphs \citep{abate2025maxcutpool}, they still operate on ordinary graph signals and do not provide a coarsening theory for sheaf-valued representations.

In standard graph pooling, a node assignment matrix $S \in \mathbb{R}^{n \times k}$ maps fine nodes to $k$ coarse nodes and is commonly used to coarsen both features and topology, for example through $X_c = S^\top X$ and $A_c = S^\top A S$. In the sheaf setting, however, such a node assignment matrix is not sufficient to define a meaningful coarse model (Fig.~\ref{fig:graph-vs-sheaf-pooling}). Since sheaf signals live in stalk-valued cochain spaces, a coarsening procedure must also specify a cochain-level prolongation map (R), describing how coarse stalk values are lifted back to fine cochains. Once this lifting is fixed, the fine sheaf Laplacian can be restricted to the retained coarse space through the Galerkin projection \(L_c = R^\ast L_{\mathcal F}R \). This is a standard construction in multigrid and algebraic multigrid methods \citep{briggs2000multigrid, trottenberg2001multigrid}, and it yields a coarse operator that represents the fine sheaf energy at the pooled resolution. Recent work has begun to address graph-level readout for sheaf networks \citep{braithwaite2024heterogeneous}, but this does not provide an operator-level theory of hierarchical sheaf coarsening. 

\begin{figure}[t]
    \centering
    \includegraphics[width=\linewidth]{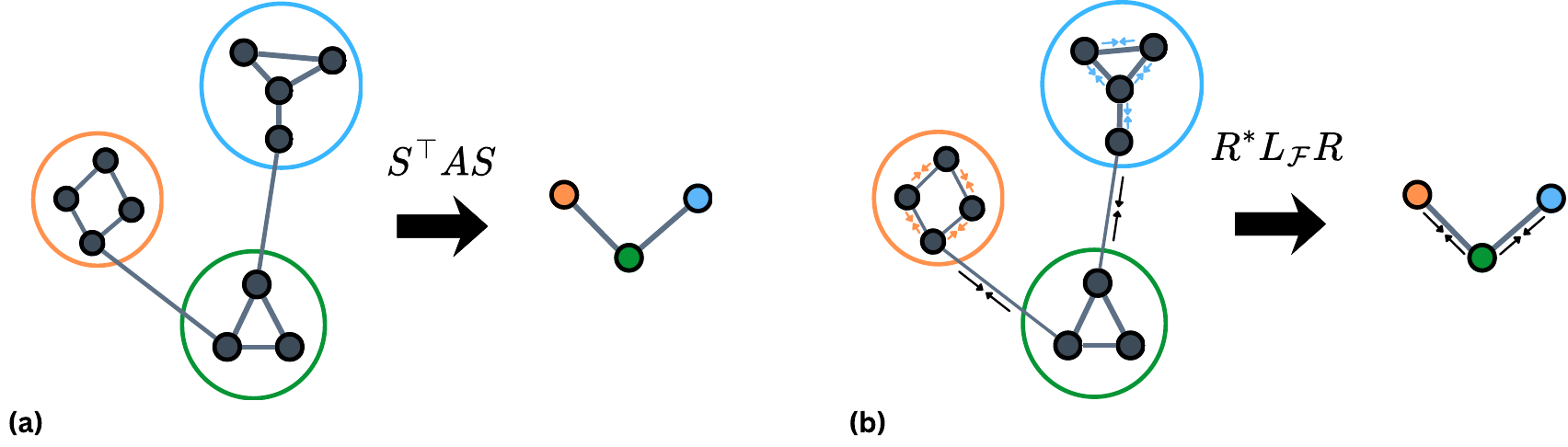}
    \caption{\textbf{Graph pooling versus sheaf-aware pooling.}
    \textbf{(a)} Standard graph pooling coarsens the graph topology through a node assignment matrix, for example via $S^\top A S$.
    \textbf{(b)} In the sheaf setting, coarsening must additionally define a prolongation map $R$ that lifts coarse stalk values to fine stalk-valued cochains. This allows the fine sheaf Laplacian to be represented on the retained coarse space through a Galerkin coarse operator, shown schematically as $R^\ast L_{\mathcal F}R$.}
    \label{fig:graph-vs-sheaf-pooling}
\end{figure}

To address the lack of an operator-level framework for hierarchical sheaf coarsening, we introduce \acrfull{hisp}, a sheaf-aware pooling method built around local spectral coarsening. Given a partition of the underlying graph, each cluster is treated as a local sheaf domain: we restrict the sheaf Laplacian to its internal edges and use the lowest-frequency eigenmodes to define the corresponding coarse stalk. In this way, pooled coordinates represent locally smooth sheaf patterns, rather than simple aggregates of node features. These local spectral bases assemble into a global cochain-level prolongation map, which induces a Galerkin coarse operator on the retained space and makes it possible to compare the fine and coarse sheaf energies. We implement this mechanism as a \acrlong{src} pooling layer, together with the \gls{pyg} infrastructure needed to train hierarchical \gls{snn} on graph-level tasks. We expect this work to broaden the scope of \gls{snn} from fixed-resolution diffusion models to hierarchical architectures for graph-level learning.

\subsection{Contributions}
\begin{itemize}
    \item \textbf{A theory of sheaf coarsening.} We develop a theory of sheaf coarsening based on local spectral analysis of cellular sheaves. Given a graph partition, we define coarse stalks using the low-frequency eigenvectors of the cluster-internal sheaf Laplacians, which naturally induces a prolongation map from the coarse sheaf to the original sheaf. Using this prolongation operator, we construct a coarse sheaf Laplacian via a Galerkin projection that preserves the underlying sheaf energy structure. Our framework further characterizes the energy lost through coarsening, decomposing the approximation error into truncation loss, arising from spectral dimension reduction, and realization loss, arising from the inability of the coarse sheaf to exactly represent the projected operator.
    \item \textbf{An implementation framework for hierarchical sheaf learning.} We implement the proposed pooling mechanism as a \acrlong{src} pooling layer that can be inserted between sheaf diffusion layers. To make this usable for graph-level learning, we also provide a \gls{pyg} implementation of \gls{nsd}, based on a reusable \gls{mp}-style layer, together with the batching and collation machinery required to handle variable-size graphs and their lifted sheaf Laplacians. This enables hierarchical \gls{snn} for graph classification and other graph-level tasks.
\end{itemize}

\section{Background and Related Work}

\subsection{Cellular Sheaves and Sheaf Laplacians}
\label{subsec:sheaves}

Let $G=(V,E)$ be a finite undirected graph. A cellular sheaf $\mathcal F$ on $G$ assigns a finite-dimensional real vector space $\mathcal F(v)$ to each vertex $v\in V$, a finite-dimensional real vector space $\mathcal F(e)$ to each edge $e\in E$, and a linear restriction map
\begin{equation}
    \mathcal F_{v\to e}:\mathcal F(v)\to \mathcal F(e)
    \label{eq:restriction-map}
\end{equation}
for every incident pair $v\triangleleft e$. The spaces $\mathcal F(v)$ and $\mathcal F(e)$ are called stalks, while the maps $\mathcal F_{v\to e}$ specify how vertex data is expressed on adjacent edges.

The space of vertex-valued sheaf signals, or $0$-cochains, is
\begin{equation}
    C^0(G;\mathcal F)
    =
    \bigoplus_{v\in V}\mathcal F(v).
    \label{eq:zero-cochains}
\end{equation}
Thus a cochain $x\in C^0(G;\mathcal F)$ assigns a vector $x_v\in \mathcal F(v)$ to every vertex. For an edge $e=(u,v)$, the values $x_u$ and $x_v$ need not lie in the same vector space, but their restrictions $\mathcal F_{u\to e}x_u$ and $\mathcal F_{v\to e}x_v$ both lie in $\mathcal F(e)$ and can therefore be compared.

After fixing an arbitrary orientation for each edge, the coboundary operator
\begin{equation}
    \delta_{\mathcal F}:C^0(G;\mathcal F)\to C^1(G;\mathcal F),
    \qquad
    C^1(G;\mathcal F)=\bigoplus_{e\in E}\mathcal F(e),
    \label{eq:coboundary-domain}
\end{equation}
is defined by
\begin{equation}
    (\delta_{\mathcal F}x)_e
    =
    \mathcal F_{u\to e}x_u-\mathcal F_{v\to e}x_v,
    \qquad e=(u,v).
    \label{eq:coboundary}
\end{equation}
The degree-zero sheaf Laplacian is
\begin{equation}
    L_{\mathcal F}
    =
    \delta_{\mathcal F}^{\ast}\delta_{\mathcal F}.
    \label{eq:sheaf-laplacian}
\end{equation}
It is self-adjoint and positive semidefinite, with energy
\begin{equation}
    x^{\ast}L_{\mathcal F}x
    =
    \|\delta_{\mathcal F}x\|^2
    =
    \sum_{e=(u,v)\in E}
    \left\|
    \mathcal F_{u\to e}x_u-
    \mathcal F_{v\to e}x_v
    \right\|^2 .
    \label{eq:sheaf-energy}
\end{equation}
Hence $L_{\mathcal F}$ measures the total inconsistency of a sheaf signal across the graph.

The harmonic space of the sheaf Laplacian is
\begin{equation}
    \ker L_{\mathcal F}
    =
    \left\{
    x\in C^0(G;\mathcal F):
    \mathcal F_{u\to e}x_u=
    \mathcal F_{v\to e}x_v
    \ \text{for all } e=(u,v)\in E
    \right\}.
    \label{eq:global-sections}
\end{equation}
These zero-energy cochains are the global sections of the sheaf. When all stalks are $\mathbb R$ and all restriction maps are identity maps, $L_{\mathcal F}$ reduces to the ordinary graph Laplacian. Cellular sheaves therefore generalize graph Laplacians by replacing equality of neighboring node values with agreement after restriction to edge stalks.

\subsection{\acrlong{nsd}}
\label{subsec:nsd}

\gls{nsd} replaces standard graph diffusion with diffusion driven by a learned sheaf Laplacian. Let $G=(V,E)$ be an undirected graph and assume that each node $v\in V$ carries a $d$-dimensional stalk-valued feature $x_v\in\mathcal F(v)$. Stacking node features gives a cochain $x\in C^0(G;\mathcal F)$, and with $f$ feature channels we write
\begin{equation}
    X\in \mathbb R^{(nd)\times f}.
    \label{eq:nsd-feature-matrix}
\end{equation}

In \gls{nsd}, the sheaf is not fixed a priori. Instead, at each layer $t$, the restriction maps are learned from incident node features. For an edge $e=(u,v)$, one typically sets
\begin{equation}
    \mathcal F^{(t)}_{u\to e}
    =
    \Phi^{(t)}(x_u,x_v),
    \qquad
    \mathcal F^{(t)}_{v\to e}
    =
    \Phi^{(t)}(x_v,x_u),
    \label{eq:learned-restriction-maps}
\end{equation}
where $\Phi^{(t)}$ is a learnable matrix-valued function, usually implemented by an \gls{mlp} followed by a reshaping operation. Depending on the parametrization, these maps may be diagonal, orthogonal, or fully general matrices. The resulting restriction maps define a layer-dependent sheaf Laplacian $L_{\mathcal F^{(t)}}$, or its normalized version $\Delta_{\mathcal F^{(t)}}$.

A discrete \gls{nsd} layer then performs a residual sheaf diffusion step of the form
\begin{equation}
    X^{(t+1)}
    =
    X^{(t)}
    -
    \sigma\!\left(
    \Delta_{\mathcal F^{(t)}}
    (I_n\otimes W^{(t)}_1)
    X^{(t)}
    W^{(t)}_2
    \right),
    \label{eq:nsd-layer}
\end{equation}
where $W^{(t)}_1$ acts on the stalk dimension, $W^{(t)}_2$ acts on feature channels, $\otimes$ denotes the Kronecker product, and $\sigma$ is a nonlinearity.

Thus, \gls{nsd} learns both the node representations and the local geometry governing diffusion. Instead of forcing adjacent node features to become directly similar, the sheaf Laplacian measures disagreement after mapping node features into edge stalks. This makes the diffusion process adaptive to richer compatibility patterns, which is especially useful in heterophilic graphs and in settings where ordinary graph diffusion suffers from oversmoothing.

\subsection{Spectral Graph Pooling}
\label{subsec:spectral_graph_pooling}

\paragraph{Pooling as \acrlong{src}.}
A graph pooling operator implements a function $\texttt{POOL}: \mathcal{G}\mapsto \mathcal{G}_{P}=(\mathcal{V}_{P},\mathcal{E}_{P})$ such that $|\mathcal{V}_{P}|=K$, with $K\leq N$. 
We let $\mathbf{X}_P\in \mathbb{R}^{K\times F}$ be the pooled nodes features, i.e., the features of the nodes $\mathcal{V}_{P}$ in the pooled graph.
To formally describe the \texttt{POOL} function, we adopt the \gls{src} framework, that expresses a graph pooling operator through the combination of three functions: \textit{selection}, \textit{reduction}, and \textit{connection}. 
The selection function (\texttt{SEL}) clusters the nodes of the input graph into subsets called \textit{supernodes}, namely $\texttt{SEL}: \mathcal{G}\mapsto \mathcal{S}=\left\{\mathcal{S}_1, \ldots,\mathcal{S}_K\right\}$ with $\mathcal{S}_j=\left\{s_i^j\right\}_{i=1}^N$ where $s_i^j$ is the membership score of node $i$ to supernode $j$. The memberships are conveniently represented by a cluster assignment matrix $\mathbf{S}$, with entries $[\mathbf{S}]_{ij} = s_i^j$.
Typically, a node can be assigned to zero, one, or several supernodes, each with different scores. 
The reduction function (\texttt{RED}) creates the pooled vertex features by aggregating the features of the vertices assigned to the same supernode, that is, $\texttt{RED}: (\mathcal{G}, \mathbf{S}) \mapsto \mathbf{X}_{P}$. Finally, the connect function (\texttt{CON}) generates the edges, and potentially the edge features, by connecting the supernodes.

\paragraph{Graph smoothness and local Fourier bases.}
A graph signal is a function defined on the vertices of a graph \citep{shuman2013emerging}, i.e.,
$\mathbf{x}: \mathcal{V}\to \mathbb{R}$, which we identify with a vector
$\mathbf{x}\in\mathbb{R}^{N}$. Signal processing on graphs uses the graph
topology as the underlying data domain: smoothness, frequency, and filtering are
therefore defined with respect to the connectivity and edge weights of the graph.
Let $\mathbf{L}=\mathbf{D}-\mathbf{A}$ be the graph Laplacian.
The quadratic form
\begin{equation}
    \mathbf{x}^{\top}\mathbf{L}\mathbf{x}
    =
    \sum_{(i,j)\in \mathcal{E}}
    \mathbf{A}_{ij}
    (\mathbf{x}_i-\mathbf{x}_j)^2
    \label{eq:graph_dirichlet_energy}
\end{equation}
measures the global variation of the signal over the graph. It is small when
vertices connected by large edge weights carry similar signal values. Since
$\mathbf{L}$ is symmetric positive semidefinite, it admits an orthonormal
eigendecomposition
\begin{equation}
    \mathbf{L}\mathbf{u}_{\ell}
    =
    \lambda_{\ell}\mathbf{u}_{\ell},
    \qquad
    0=\lambda_1\leq\lambda_2\leq\cdots\leq\lambda_N.
    \label{eq:graph_fourier_modes}
\end{equation}
The eigenvectors $\{\mathbf{u}_{\ell}\}_{\ell=1}^{N}$ define a graph Fourier
basis. The associated eigenvalues provide a notion of graph frequency: eigenvectors
with small eigenvalues vary slowly across strongly connected vertices, whereas
eigenvectors with large eigenvalues oscillate more rapidly across the graph.
Thus, projecting a graph signal onto the first Laplacian eigenvectors extracts its
low-frequency, smooth components with respect to the graph topology.

\paragraph{Spectral instantiations of \acrlong{src}.} 
The three phases of graph pooling can each be instantiated using spectral information. In the \textit{selection} phase, a common non-trainable choice is spectral clustering: one computes low-frequency eigenvectors of a graph Laplacian, uses the corresponding row embeddings as node representations, and applies $k$-means to obtain a partition of the vertices into supernodes \(\mathcal S_1,\ldots,\mathcal S_K\) \citep{von2007tutorial}.

Once the supernodes have been selected, the \textit{reduction} phase can be performed locally in the graph Fourier domain. Let \(\mathcal G_j\) be the subgraph induced by the vertices assigned to supernode \(\mathcal S_j\), and let \(\mathbf L_j\) be its graph Laplacian. If \[ \mathbf U_{j,H} = [\mathbf u_{j,1},\ldots,\mathbf u_{j,H}] \] contains the first \(H\) eigenvectors of \(\mathbf L_j\), and \(\mathbf X_j\in\mathbb R^{|\mathcal S_j|\times F}\) is the restriction of the node-feature matrix to the cluster, the reduced representation is \begin{equation} \widehat{\mathbf X}_j = \mathbf U_{j,H}^{\top}\mathbf X_j \in \mathbb R^{H\times F}. \label{eq:local_spectral_reduction} \end{equation} The rows of \(\widehat{\mathbf X}_j\) are the first local Fourier coefficients of the signal on the induced subgraph. Finally, the \textit{connection} phase constructs a coarsened graph whose nodes are the selected supernodes, with edges induced by the connectivity of the original graph between the corresponding subgraphs. This spectral reduction-and-connection principle is used in \gls{eigen} \citep{ma2019graph}, where each selected subgraph is treated as a supernode and the leading eigenvectors of its Laplacian serve as local pooling filters. 

\section{Sheaf Coarsening by Local Spectral Prolongation}
\label{sec:sheaf_coarsening}

\subsection{The sheaf coarsening problem}
\label{subsec:sheaf_coarsening_problem}

We now formalize the problem of coarsening a cellular sheaf. For scalar graph signals, an assignment matrix $\mathbf S\in\mathbb R^{N\times K}$ already acts as a map between coarse and fine signal spaces. The corresponding Galerkin coarse operator is
\begin{equation}
\mathbf L_c
=
\mathbf S^\top \mathbf L \mathbf S .
\label{eq:SLS}
\end{equation}
This construction is meaningful because both fine and coarse signals live in
ordinary Euclidean node spaces.

For cellular sheaves, this is no longer sufficient. A sheaf signal does not live
in $\mathbb{R}^{N}$, but in the cochain space
\begin{equation}
    C^0(G;\mathcal F)
    =
    \bigoplus_{v\in V}\mathcal F(v),
    \label{eq:fine_cochain_space}
\end{equation}
where each node carries values in its own stalk. Moreover, the sheaf Laplacian
$L_{\mathcal F}$ measures compatibility through restriction maps, rather than
through scalar differences between neighboring nodes. Therefore, a pooling
operator must specify not only which vertices are grouped together, but also how
a coarse stalk value is represented as a collection of fine stalk values inside
each cluster.

The primitive object for sheaf coarsening is thus a cochain-level prolongation map. Given a coarse cochain space $C_c^0$, we require a linear map
\begin{equation}
    R:C_c^0\to C^0(G;\mathcal F),
    \label{eq:global_prolongation_problem}
\end{equation}
which lifts coarse sheaf signals back to fine sheaf signals. The adjoint $R^{\ast}$ then plays the role of a pooling map from fine cochains to coarse cochains. 

Once such a prolongation is fixed, the natural coarse operator is the Galerkin
operator
\begin{equation}
    L_{\mathrm{Gal}}
    =
    R^{\ast}L_{\mathcal F}R.
    \label{eq:sheaf_galerkin_operator}
\end{equation}
For every coarse cochain $z\in C_c^0$, it satisfies
\begin{equation}
    z^{\ast}L_{\mathrm{Gal}}z
    =
    (Rz)^{\ast}L_{\mathcal F}(Rz)
    =
    \|\delta_{\mathcal F}Rz\|^2.
    \label{eq:galerkin_energy_identity}
\end{equation}
Thus, $L_{\mathrm{Gal}}$ exactly measures the fine sheaf energy of the lifted coarse signal. In this sense, it is the correct coarse energy operator associated with the chosen coarse space.

The remaining question is structural. A sheaf Laplacian is not an arbitrary positive semidefinite operator: it must arise from vertex stalks, edge stalks, and restriction maps through a coboundary factorization. Hence, after defining the Galerkin operator, one must ask whether there exists a coarse sheaf
$\mathcal F_c$ such that
\begin{equation}
    L_{\mathrm{Gal}}
    =
    L_{\mathcal F_c}
    =
    \delta_{\mathcal F_c}^{\ast}\delta_{\mathcal F_c}.
    \label{eq:coarse_sheaf_realization_problem}
\end{equation}
This realization question is separate from the choice of the coarse space itself.
The sections below first construct the coarse cochain space by local spectral
prolongation, and then study under which conditions the resulting Galerkin
operator can be interpreted as a sheaf Laplacian on the coarsened graph.

\subsection{Partitions and internal sheaf Laplacians}
\label{subsec:partitions_internal_laplacians}

Let
\begin{equation}
    V = C_1 \sqcup \cdots \sqcup C_K
    \label{eq:vertex_partition}
\end{equation}
be a partition of the vertex set. For each cluster $C_a$, we denote by
$G[C_a]$ the subgraph induced by $C_a$, and we restrict the sheaf
$\mathcal F$ to the vertices and edges of this induced subgraph. The associated
local vertex cochain space is
\begin{equation}
    C^0(G[C_a];\mathcal F)
    =
    \bigoplus_{v\in C_a}\mathcal F(v),
    \qquad
    N_a := \dim C^0(G[C_a];\mathcal F).
    \label{eq:local_cochain_space}
\end{equation}

Let $E_a$ be the set of internal edges of the cluster, namely edges with both
endpoints in $C_a$. The internal coboundary
\begin{equation}
    \delta_a:
    C^0(G[C_a];\mathcal F)
    \longrightarrow
    \bigoplus_{e\in E_a}\mathcal F(e)
    \label{eq:internal_coboundary_domain}
\end{equation}
is defined by
\begin{equation}
    (\delta_a x)_e
    =
    \mathcal F_{u\to e}x_u
    -
    \mathcal F_{v\to e}x_v,
    \qquad
    e=(u,v)\in E_a.
    \label{eq:internal_coboundary}
\end{equation}
The corresponding internal sheaf Laplacian is
\begin{equation}
    L_a
    =
    \delta_a^{\ast}\delta_a
    :
    C^0(G[C_a];\mathcal F)
    \to
    C^0(G[C_a];\mathcal F).
    \label{eq:internal_sheaf_laplacian}
\end{equation}
It measures only the sheaf inconsistency produced by edges internal to the
cluster.

Since $L_a$ is self-adjoint and positive semidefinite, it admits an orthonormal
eigendecomposition
\begin{equation}
    L_a\phi_i^a
    =
    \lambda_i^a\phi_i^a,
    \qquad
    0\leq \lambda_1^a\leq\cdots\leq \lambda_{N_a}^a.
    \label{eq:internal_sheaf_spectrum}
\end{equation}
The eigenvectors of $L_a$ play the role of local sheaf Fourier modes. Modes with
$\lambda_i^a=0$ are exact local sections on the cluster, while modes with small
positive eigenvalues are approximate local sections: they are not perfectly
consistent on internal edges, but they have low internal sheaf energy.

\subsection{Coarse stalks as low-frequency local sheaf modes}
\label{subsec:coarse_stalks_low_frequency_modes}

We now define the coarse stalk associated with each cluster. Fix a cluster
$C_a$ and choose an integer $r_a\leq N_a$. The coarse stalk at the corresponding
coarse vertex is
\begin{equation}
    V_a \simeq \mathbb{R}^{r_a}.
    \label{eq:coarse_stalk}
\end{equation}
Its coordinates are interpreted as coefficients in a local low-frequency basis
of the cluster cochain space.

Let
\begin{equation}
    U_a
    =
    \begin{bmatrix}
    \phi_1^a & \cdots & \phi_{r_a}^a
    \end{bmatrix}
    :
    \mathbb{R}^{r_a}
    \longrightarrow
    C^0(G[C_a];\mathcal F)
    \label{eq:local_spectral_basis}
\end{equation}
be the matrix whose columns are the first $r_a$ eigenvectors of the internal
sheaf Laplacian $L_a$. We define the local prolongation by
\begin{equation}
    R_a := U_a.
    \label{eq:local_prolongation}
\end{equation}
Thus, for $z_a\in V_a$, the lifted fine cochain on the cluster is
\begin{equation}
    R_a z_a
    =
    U_a z_a
    =
    \sum_{i=1}^{r_a} z_{a,i}\phi_i^a.
    \label{eq:local_lift}
\end{equation}
The vector $z_a$ therefore does not store arbitrary aggregated features. It
stores the coefficients of a local sheaf-adapted expansion.

The global coarse cochain space is the direct sum of the coarse stalks,
\begin{equation}
    C_c^0
    =
    \bigoplus_{a=1}^{K} V_a,
    \label{eq:global_coarse_cochain_space}
\end{equation}
and the global prolongation is the block-local map
\begin{equation}
    R
    =
    \bigoplus_{a=1}^{K} R_a
    :
    C_c^0
    \longrightarrow
    C^0(G;\mathcal F).
    \label{eq:global_block_prolongation}
\end{equation}
Since the eigenvectors are chosen orthonormally, each $U_a$ has orthonormal
columns. Consequently, the adjoint
\begin{equation}
    R_a^\ast = U_a^\ast
    \label{eq:local_pooling_adjoint}
\end{equation}
is the local pooling map from fine cluster cochains to coarse stalk coordinates,
and $R^\ast$ is the corresponding global pooling map.

The choice of the first $r_a$ eigenvectors is justified by the Courant--Fischer min--max theorem \citep{horn2012matrix}: among all $r_a$-dimensional subspaces of the cluster cochain space, their span minimizes the largest possible internal sheaf energy of a unit-norm vector.

\begin{theorem}[Local spectral optimality]
\label{thm:local_spectral_optimality}
Let $L_a$ be the internal sheaf Laplacian of cluster $C_a$, with eigenvalues
$0\leq\lambda_1^a\leq\cdots\leq\lambda_{N_a}^a$ and orthonormal eigenvectors
$\{\phi_i^a\}_{i=1}^{N_a}$. Among all $r_a$-dimensional subspaces
$W\subset C^0(G[C_a];\mathcal F)$, the subspace
\begin{equation}
    \operatorname{span}\{\phi_1^a,\ldots,\phi_{r_a}^a\}
    \label{eq:optimal_local_subspace}
\end{equation}
minimizes the worst-case internal sheaf energy:
\begin{equation}
    \min_{\dim W=r_a}
    \;
    \max_{\substack{x\in W\\ \|x\|=1}}
    x^\ast L_a x
    =
    \lambda_{r_a}^a.
    \label{eq:minmax_local_energy}
\end{equation}
\end{theorem}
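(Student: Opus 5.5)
This is the Courant--Fischer min--max characterization of the $r_a$-th eigenvalue. We prove it directly by establishing two inequalities: one showing that $\lambda_{r_a}^a$ is attained by the span in \eqref{eq:optimal_local_subspace}, and one showing that no $r_a$-dimensional subspace does better. Throughout we work in the finite-dimensional real inner product space $C^0(G[C_a];\mathcal F)$, of dimension $N_a$. We expand vectors in the orthonormal eigenbasis $\{\phi_i^a\}$, writing $x=\sum_i c_i\phi_i^a$. Since $L_a$ is self-adjoint, this gives $x^\ast L_a x=\sum_i\lambda_i^a c_i^2$ and $\|x\|^2=\sum_i c_i^2$.

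\emph{Upper bound.} Take $W_\star=\operatorname{span}\{\phi_1^a,\ldots,\phi_{r_a}^a\}$, which has dimension exactly $r_a$ by orthonormality. For a unit vector $x\in W_\star$ we have $c_i=0$ for $i>r_a$. Using the ordering of the eigenvalues,
\[
x^\ast L_a x=\sum_{i\le r_a}\lambda_i^a c_i^2\le\lambda_{r_a}^a\sum_{i\le r_a}c_i^2=\lambda_{r_a}^a .
\]
Equality holds at $x=\phi_{r_a}^a$. Hence $\max_{x\in W_\star,\|x\|=1}x^\ast L_a x=\lambda_{r_a}^a$, and the minimum over all $W$ is at most $\lambda_{r_a}^a$.

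\emph{Lower bound.} Let $W$ be any subspace with $\dim W=r_a$, and set $T=\operatorname{span}\{\phi_{r_a}^a,\ldots,\phi_{N_a}^a\}$, which has dimension $N_a-r_a+1$. Since $\dim W+\dim T=N_a+1>N_a$, the dimension formula gives $\dim(W\cap T)\ge 1$. We can therefore pick a unit vector $x\in W\cap T$. Its coefficients vanish for $i<r_a$, so
\[
x^\ast L_a x=\sum_{i\ge r_a}\lambda_i^a c_i^2\ge\lambda_{r_a}^a .
\]
Hence the maximum of the Rayleigh quotient over the unit sphere of $W$ is at least $\lambda_{r_a}^a$ for every admissible $W$. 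The maximum exists because that unit sphere is compact and the quadratic form is continuous. Combining the two bounds yields \eqref{eq:minmax_local_energy}, with the minimum attained at $W_\star$.

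The only step requiring care is the dimension-counting intersection argument in the lower bound; everything else is diagonalization. It also helps to note that $L_a=\delta_a^\ast\delta_a$ is self-adjoint and positive semidefinite, as recorded after \eqref{eq:internal_sheaf_laplacian}. This guarantees the orthonormal eigenbasis used above, and it lets us read $x^\ast L_a x=\|\delta_a x\|^2$ as the internal sheaf energy.
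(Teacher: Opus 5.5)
Your proof is correct and follows the paper's route: the upper-bound computation on $\operatorname{span}\{\phi_1^a,\ldots,\phi_{r_a}^a\}$, with equality at $\phi_{r_a}^a$, is the same argument the paper gives. The only difference is that the paper cites the Courant--Fischer theorem for the lower bound, whereas you prove it directly by intersecting $W$ with $\operatorname{span}\{\phi_{r_a}^a,\ldots,\phi_{N_a}^a\}$ and counting dimensions, which makes your version self-contained.
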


The proof is given in Appendix~\ref{app:proof_local_spectral_optimality}.

The retained coarse stalk therefore captures the $r_a$-dimensional subspace
with minimal worst-case internal inconsistency.

\subsection{Truncation loss}
\label{subsec:truncation_loss}

The local spectral construction is compressive whenever $r_a<N_a$. In this
case, only the first $r_a$ local sheaf modes are represented in the coarse stalk
$V_a$, while the remaining modes are discarded. The following proposition
quantifies the resulting loss.

\begin{proposition}[Truncation loss]
\label{prop:truncation_loss}
Let $L_a$ be the internal sheaf Laplacian of cluster $C_a$, with orthonormal
eigenbasis $\{\phi_i^a\}_{i=1}^{N_a}$ and eigenvalues
$0\leq \lambda_1^a\leq\cdots\leq\lambda_{N_a}^a$. Let
\[
    U_a=
    \begin{bmatrix}
    \phi_1^a & \cdots & \phi_{r_a}^a
    \end{bmatrix}
\]
be the retained local spectral basis. For any
$x_a\in C^0(G[C_a];\mathcal F)$, write
\begin{equation}
    x_a
    =
    \sum_{i=1}^{N_a}
    \alpha_i^a\phi_i^a .
    \label{eq:local_eigen_expansion_truncation}
\end{equation}
Define the retained and discarded components by
\begin{equation}
    x_a^{\mathrm{ret}}
    =
    U_aU_a^\ast x_a,
    \qquad
    x_a^{\mathrm{disc}}
    =
    (I-U_aU_a^\ast)x_a.
    \label{eq:retained_discarded_definition}
\end{equation}
Then
\begin{equation}
    x_a^{\mathrm{ret}}
    =
    \sum_{i=1}^{r_a}
    \alpha_i^a\phi_i^a,
    \qquad
    x_a^{\mathrm{disc}}
    =
    \sum_{i>r_a}
    \alpha_i^a\phi_i^a.
    \label{eq:retained_discarded_expansion}
\end{equation}
Moreover,
\begin{equation}
    \|x_a^{\mathrm{disc}}\|^2
    =
    \sum_{i>r_a}
    |\alpha_i^a|^2,
    \label{eq:discarded_norm_truncation}
\end{equation}
and
\begin{equation}
    (x_a^{\mathrm{disc}})^\ast
    L_a
    x_a^{\mathrm{disc}}
    =
    \sum_{i>r_a}
    \lambda_i^a|\alpha_i^a|^2.
    \label{eq:discarded_energy_truncation}
\end{equation}
If, in addition, $x_a^\ast L_a x_a\leq E_a$ and
$\lambda_{r_a+1}^a>0$, then
\begin{equation}
    \|x_a^{\mathrm{disc}}\|^2
    \leq
    \frac{E_a}{\lambda_{r_a+1}^a}.
    \label{eq:spectral_gap_truncation_bound}
\end{equation}
\end{proposition}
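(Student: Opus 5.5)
The whole argument rests on the orthonormal eigenbasis $\{\phi_i^a\}_{i=1}^{N_a}$ of the self-adjoint positive semidefinite operator $L_a$. Every claim reduces to coordinate computations in this basis.

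\textbf{Projection identities.} Since the columns of $U_a$ are orthonormal, $U_a^\ast x_a$ is the coefficient vector $(\langle \phi_1^a,x_a\rangle,\ldots,\langle\phi_{r_a}^a,x_a\rangle)^\top$. By orthonormality applied to the expansion \eqref{eq:local_eigen_expansion_truncation}, these coefficients equal $(\alpha_1^a,\ldots,\alpha_{r_a}^a)$. Applying $U_a$ therefore gives $x_a^{\mathrm{ret}}=\sum_{i\le r_a}\alpha_i^a\phi_i^a$, and subtracting this from $x_a$ gives the expansion of $x_a^{\mathrm{disc}}$ in \eqref{eq:retained_discarded_expansion}.

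\textbf{Norm and energy of the discarded part.} The norm identity \eqref{eq:discarded_norm_truncation} is Parseval's identity for the orthonormal family $\{\phi_i^a\}_{i>r_a}$. For the energy identity \eqref{eq:discarded_energy_truncation}, I would apply $L_a$ termwise, using $L_a\phi_i^a=\lambda_i^a\phi_i^a$, to get $L_a x_a^{\mathrm{disc}}=\sum_{i>r_a}\lambda_i^a\alpha_i^a\phi_i^a$. Taking the inner product with $x_a^{\mathrm{disc}}$ and using orthonormality once more yields $\sum_{i>r_a}\lambda_i^a|\alpha_i^a|^2$. The eigenvalues are real, so no conjugation subtleties arise.

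\textbf{Spectral gap bound.} The same computation applied to the full vector gives $x_a^\ast L_a x_a=\sum_{i=1}^{N_a}\lambda_i^a|\alpha_i^a|^2$. Since all $\lambda_i^a\ge 0$, dropping the terms $i\le r_a$ gives $x_a^\ast L_a x_a\ge\sum_{i>r_a}\lambda_i^a|\alpha_i^a|^2$. The eigenvalues are ordered, so $\lambda_i^a\ge\lambda_{r_a+1}^a$ for every $i>r_a$. This yields the chain
\[
E_a\ \ge\ x_a^\ast L_a x_a\ \ge\ \sum_{i>r_a}\lambda_i^a|\alpha_i^a|^2\ \ge\ \lambda_{r_a+1}^a\sum_{i>r_a}|\alpha_i^a|^2\ =\ \lambda_{r_a+1}^a\|x_a^{\mathrm{disc}}\|^2 .
\]
Dividing by $\lambda_{r_a+1}^a>0$ gives \eqref{eq:spectral_gap_truncation_bound}.

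There is no genuine obstacle here. The only point needing care is the step that discards the retained terms: it relies on positive semidefiniteness, $\lambda_i^a\ge 0$, which holds because $L_a=\delta_a^\ast\delta_a$. The degenerate case $r_a=N_a$ is covered automatically: the discarded sums are empty, and the final bound is then vacuous.
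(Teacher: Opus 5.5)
Your proposal is correct and follows essentially the same route as the paper: expand $x_a$ in the orthonormal eigenbasis, identify $U_aU_a^\ast$ as the orthogonal projector onto the first $r_a$ modes, read off the norm and energy identities from orthonormality and $L_a\phi_i^a=\lambda_i^a\phi_i^a$, and obtain the spectral-gap bound from the same chain of inequalities using $\lambda_i^a\ge 0$ and the eigenvalue ordering. Your explicit remarks on where positive semidefiniteness is used and on the degenerate case $r_a=N_a$ (where $\lambda_{r_a+1}^a$ does not exist, so the bound has no content) simply make explicit what the paper leaves implicit.
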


The proof is given in Appendix~\ref{app:proof_truncation_loss}.

The proposition shows that truncation is a dimension-reduction loss: the
coefficients $\{\alpha_i^a:i>r_a\}$ are not represented in the coarse stalk.
Consequently, they cannot be recovered by any later operator acting only on
$V_a$. The spectral-gap bound further shows that, when the first discarded
eigenvalue is large, low-energy local cochains are well approximated by their
projection onto the retained coarse stalk.

\section{Realizing the Coarse Operator as a Sheaf Laplacian}
\label{sec:coarse_operator_sheaf_laplacian}

We now ask whether the Galerkin energy in Eq.~\eqref{eq:sheaf_galerkin_operator} can be realized by a coarse sheaf. The construction has two parts. First, every fine edge crossing between two clusters induces a canonical coarse restriction map, and therefore an exact crossing-edge contribution. Second, the Galerkin energy may also contain retained internal energy from within each cluster. The crossing-edge construction is described first; the remaining internal contribution is analyzed in the following subsection.

\subsection{Crossing-edge sheaf realization}
\label{subsec:crossing_edge_realization}

We first construct the coarse sheaf data associated with edges crossing between
clusters. Let $C_a$ and $C_b$ be two distinct clusters, and define
\begin{equation}
    E_{ab}
    =
    \{e=(u,v)\in E:\ u\in C_a,\ v\in C_b\}.
    \label{eq:crossing_edge_set}
\end{equation}
If $E_{ab}\neq\emptyset$, we introduce a coarse edge $\varepsilon_{ab}$ between
the coarse vertices $a$ and $b$, with edge stalk
\begin{equation}
    \mathcal F_c(\varepsilon_{ab})
    =
    \bigoplus_{e\in E_{ab}}
    \mathcal F(e).
    \label{eq:coarse_crossing_edge_stalk}
\end{equation}

Let
\[
    \pi_u^a:C^0(G[C_a];\mathcal F)\to\mathcal F(u)
\]
be the coordinate projection onto the stalk of vertex $u\in C_a$. Given local
prolongations
\[
    R_a:V_a\to C^0(G[C_a];\mathcal F),
    \qquad
    R_b:V_b\to C^0(G[C_b];\mathcal F),
\]
we define the coarse restriction maps by
\begin{equation}
    \mathcal F_{c,a\to\varepsilon_{ab}}
    =
    \bigoplus_{e=(u,v)\in E_{ab}}
    \mathcal F_{u\to e}\,\pi_u^a R_a,
    \label{eq:coarse_restriction_left}
\end{equation}
and
\begin{equation}
    \mathcal F_{c,b\to\varepsilon_{ab}}
    =
    \bigoplus_{e=(u,v)\in E_{ab}}
    \mathcal F_{v\to e}\,\pi_v^b R_b.
    \label{eq:coarse_restriction_right}
\end{equation}
Thus, a coarse value is first lifted to fine stalk values inside its cluster and
then restricted to the fine edge stalks crossing to the neighboring cluster.

These maps define a crossing coboundary
\begin{equation}
    \delta_{\mathrm{cross}}:
    C_c^0
    \to
    \bigoplus_{\varepsilon_{ab}}
    \mathcal F_c(\varepsilon_{ab})
    \label{eq:crossing_coboundary_domain}
\end{equation}
by
\begin{equation}
    (\delta_{\mathrm{cross}}z)_{\varepsilon_{ab}}
    =
    \mathcal F_{c,a\to\varepsilon_{ab}}z_a
    -
    \mathcal F_{c,b\to\varepsilon_{ab}}z_b .
    \label{eq:crossing_coboundary}
\end{equation}
By construction, $\delta_{\mathrm{cross}}z$ is the restriction of the fine
coboundary $\delta_{\mathcal F}(Rz)$ to the fine edges crossing between clusters.
Therefore, $\|\delta_{\mathrm{cross}}z\|^2$ accounts exactly for the
inter-cluster component of the fine sheaf energy of the lifted coarse cochain.

\subsection{Retained internal energy}
\label{subsec:retained_internal_energy}

The crossing-edge construction accounts only for the fine edges connecting
different clusters. The Galerkin operator, however, measures the full fine sheaf
energy of the lifted coarse signal. This energy also includes the contribution
of internal cluster edges. The following theorem makes this decomposition
explicit.

\begin{theorem}[Galerkin energy decomposition]
\label{thm:galerkin_energy_decomposition}
Let
\[
    R=\bigoplus_{a=1}^{K}R_a:
    C_c^0\to C^0(G;\mathcal F)
\]
be a block-local prolongation. Let $L_a=\delta_a^\ast\delta_a$ be the internal
sheaf Laplacian of cluster $C_a$, and let $\delta_{\mathrm{cross}}$ be the
crossing coboundary defined in Section~\ref{subsec:crossing_edge_realization}.
Then, for every coarse cochain $z=(z_1,\ldots,z_K)\in C_c^0$,
\begin{equation}
    \|\delta_{\mathcal F}Rz\|^2
    =
    \|\delta_{\mathrm{cross}}z\|^2
    +
    \sum_{a=1}^{K}
    z_a^\ast R_a^\ast L_a R_a z_a .
    \label{eq:galerkin_energy_decomposition}
\end{equation}
Equivalently, the Galerkin operator decomposes as
\begin{equation}
    L_{\mathrm{Gal}}
    =
    R^\ast L_{\mathcal F}R
    =
    \delta_{\mathrm{cross}}^\ast\delta_{\mathrm{cross}}
    +
    \bigoplus_{a=1}^{K}
    R_a^\ast L_a R_a .
    \label{eq:galerkin_operator_decomposition}
\end{equation}
\end{theorem}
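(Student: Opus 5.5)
The plan is to split the fine edge set according to the partition and use the fact that the energy in Eq.~\eqref{eq:sheaf-energy} is a sum of independent per-edge terms. Since $V=C_1\sqcup\cdots\sqcup C_K$, every edge $e=(u,v)\in E$ either has both endpoints in a single cluster, so $e\in E_a$ for exactly one $a$, or has endpoints in two distinct clusters, so $e\in E_{ab}$ for exactly one unordered pair $\{a,b\}$. For the crossing edges I will orient each $e\in E_{ab}$ as $(u,v)$ with $u\in C_a$, $v\in C_b$, matching Eq.~\eqref{eq:crossing_edge_set}. Flipping the orientation only changes the sign of $(\delta_{\mathcal F}x)_e$, so the squared norms are unchanged. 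This gives
\[
\|\delta_{\mathcal F}x\|^2=\sum_{a=1}^K\sum_{e\in E_a}\|(\delta_{\mathcal F}x)_e\|^2+\sum_{\{a,b\}}\sum_{e\in E_{ab}}\|(\delta_{\mathcal F}x)_e\|^2
\]
for any $x\in C^0(G;\mathcal F)$, and I will apply it to $x=Rz$.

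Next I will identify each piece. Because $R$ is block-local, the restriction of $Rz$ to cluster $C_a$ is $R_az_a$, and its vertex values are $(Rz)_u=\pi_u^aR_az_a$ for $u\in C_a$. For an internal edge $e\in E_a$ both endpoints lie in $C_a$, so $(\delta_{\mathcal F}Rz)_e=(\delta_a R_az_a)_e$ by comparing Eq.~\eqref{eq:coboundary} with Eq.~\eqref{eq:internal_coboundary}. Summing over $E_a$ then gives $\|\delta_aR_az_a\|^2=z_a^\ast R_a^\ast L_aR_az_a$.

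For a crossing edge $e=(u,v)\in E_{ab}$, the fine coboundary value is $\mathcal F_{u\to e}\pi_u^aR_az_a-\mathcal F_{v\to e}\pi_v^bR_bz_b$. By Eqs.~\eqref{eq:coarse_restriction_left}--\eqref{eq:coarse_restriction_right}, this is exactly the $e$-component of $(\delta_{\mathrm{cross}}z)_{\varepsilon_{ab}}$ in the direct sum $\bigoplus_{e\in E_{ab}}\mathcal F(e)$. The norm on a direct sum is the sum of the component norms, so summing over $E_{ab}$ and over pairs gives $\|\delta_{\mathrm{cross}}z\|^2$. Together with the first paragraph, this yields Eq.~\eqref{eq:galerkin_energy_decomposition}.

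For the operator form, Eq.~\eqref{eq:galerkin_energy_identity} gives $z^\ast L_{\mathrm{Gal}}z=\|\delta_{\mathcal F}Rz\|^2$. The right-hand side of Eq.~\eqref{eq:galerkin_energy_decomposition} is the quadratic form of $\delta_{\mathrm{cross}}^\ast\delta_{\mathrm{cross}}+\bigoplus_aR_a^\ast L_aR_a$. Both operators are self-adjoint and have equal quadratic forms on the real inner product space $C_c^0$. By polarization, $\langle Tz,w\rangle=\tfrac14(q(z+w)-q(z-w))$ for self-adjoint $T$, so the operators coincide. The step needing the most care is the bookkeeping of orientations and the index pairing in the coarse edge stalk: in particular, each unordered pair $\{a,b\}$ must contribute exactly one coarse edge, and every fine crossing edge must appear in exactly one $E_{ab}$, so that nothing is double counted. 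Once the orientation convention is fixed, the rest is routine.
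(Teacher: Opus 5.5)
Your proposal is correct and follows essentially the same route as the paper: you split the fine edges into internal and crossing edges, identify the crossing contributions with $\|\delta_{\mathrm{cross}}z\|^2$ and the internal ones with $z_a^\ast R_a^\ast L_aR_az_a$ via orthogonality of the edge direct sum, and then pass from quadratic forms to operators using self-adjointness. Your explicit handling of edge orientations and of unordered cluster pairs simply makes precise the bookkeeping that the paper handles by summing over $a<b$.
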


The proof is given in Appendix~\ref{app:proof_galerkin_energy_decomposition}.

In the local spectral construction, $R_a=U_a$ and the columns of $U_a$ are
eigenvectors of $L_a$. Therefore,
\begin{equation}
    U_a^\ast L_a U_a
    =
    \operatorname{diag}
    \left(
    \lambda_1^a,\ldots,\lambda_{r_a}^a
    \right).
    \label{eq:retained_internal_energy_spectral}
\end{equation}
Hence the internal contribution retained by the coarse stalk is
\begin{equation}
    z_a^\ast U_a^\ast L_a U_a z_a
    =
    \sum_{i=1}^{r_a}
    \lambda_i^a |z_{a,i}|^2 .
    \label{eq:retained_internal_energy_modes}
\end{equation}

Theorem~\ref{thm:galerkin_energy_decomposition} shows that the crossing-edge
sheaf realizes only the first term in
Eq.~\eqref{eq:galerkin_energy_decomposition}. If some retained modes have
positive internal eigenvalues, then the second term is nonzero and is part of
the Galerkin energy. Omitting it gives a coarse operator that no longer measures
the exact fine energy of lifted coarse cochains.

This loss is different from the truncation loss of
Section~\ref{subsec:truncation_loss}. Truncation removes coordinates that are
not included in the coarse stalk. Here, instead, the coordinates $z_{a,i}$ are
still present in the coarse space; what is missing is the energy assigned to
their internal variation inside the original cluster. We call this a
\emph{realization loss}: it is not caused by reducing the dimension, but by
representing the coarse operator using only crossing-edge terms.

\subsection{Self-loops or vertex potentials}
\label{subsec:self_loops_vertex_potentials}

Theorem~\ref{thm:galerkin_energy_decomposition} shows that crossing edges alone
do not necessarily realize the full Galerkin operator: the missing term is the
retained internal energy inside each cluster. This term can be represented
exactly by adding a local loop-cell, equivalently a vertex potential, at each
coarse vertex.

\begin{theorem}[Loop-cell realization of the Galerkin operator]
\label{thm:Galerkin_realization}
Let $R=\bigoplus_{a=1}^{K}R_a$ be a block-local prolongation and let
\begin{equation}
    H_a
    =
    R_a^\ast L_aR_a
    \label{eq:local_residual_operator}
\end{equation}
be the retained internal energy operator at cluster $C_a$. Choose any
factorization
\begin{equation}
    H_a=B_a^\ast B_a .
    \label{eq:local_residual_factorization}
\end{equation}
Augment the crossing coboundary by a local component
\begin{equation}
    (\delta_{\mathrm{loop}}z)_a
    =
    B_a z_a,
    \label{eq:loop_coboundary}
\end{equation}
and define
\begin{equation}
    \delta_c z
    =
    \delta_{\mathrm{cross}}z
    \oplus
    \delta_{\mathrm{loop}}z .
    \label{eq:augmented_coboundary}
\end{equation}
Then the augmented coarse coboundary realizes the Galerkin operator exactly:
\begin{equation}
    \delta_c^\ast\delta_c
    =
    R^\ast L_{\mathcal F}R.
    \label{eq:augmented_exact_realization}
\end{equation}
\end{theorem}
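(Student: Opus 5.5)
The plan is to compute $\delta_c^\ast\delta_c$ directly from the orthogonal direct-sum structure of its codomain. Then I compare the result with the decomposition already established in Theorem~\ref{thm:galerkin_energy_decomposition}.

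First, I fix the target space of $\delta_c$ as the orthogonal direct sum
\[
\Bigl(\bigoplus_{\varepsilon_{ab}}\mathcal F_c(\varepsilon_{ab})\Bigr)\oplus\Bigl(\bigoplus_{a=1}^K W_a\Bigr),
\]
where $W_a$ is the codomain of $B_a$, that is, the loop-cell stalk at coarse vertex $a$. With this inner product, for every $z\in C_c^0$,
\[
\|\delta_c z\|^2=\|\delta_{\mathrm{cross}}z\|^2+\sum_{a=1}^K\|B_az_a\|^2 .
\]

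Next, I rewrite each loop term using the factorization \eqref{eq:local_residual_factorization}:
\[
\|B_az_a\|^2=z_a^\ast B_a^\ast B_a z_a=z_a^\ast H_a z_a=z_a^\ast R_a^\ast L_aR_az_a .
\]
Substituting this into the previous display turns the right-hand side into exactly the right-hand side of \eqref{eq:galerkin_energy_decomposition}. By Theorem~\ref{thm:galerkin_energy_decomposition}, that expression equals $\|\delta_{\mathcal F}Rz\|^2=z^\ast R^\ast L_{\mathcal F}Rz$. Hence the two quadratic forms $z^\ast\delta_c^\ast\delta_c z$ and $z^\ast R^\ast L_{\mathcal F}R z$ coincide for all $z$.

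To pass from equality of quadratic forms to equality of operators, I will use the fact that both $\delta_c^\ast\delta_c$ and $R^\ast L_{\mathcal F}R$ are self-adjoint. Two self-adjoint operators on a real inner product space with the same quadratic form are equal, by polarization: $\langle Az,w\rangle=\tfrac14(q(z+w)-q(z-w))$. Alternatively, I can argue at the operator level. The adjoint of a direct-sum map gives
\[
\delta_c^\ast\delta_c=\delta_{\mathrm{cross}}^\ast\delta_{\mathrm{cross}}+\delta_{\mathrm{loop}}^\ast\delta_{\mathrm{loop}},
\qquad
\delta_{\mathrm{loop}}^\ast\delta_{\mathrm{loop}}=\bigoplus_a B_a^\ast B_a=\bigoplus_a H_a ,
\]
and comparing this with \eqref{eq:galerkin_operator_decomposition} yields the claim.

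There is no real obstacle here. The only points needing care are the following.
\begin{itemize}
\item The loop cells' stalks must be declared orthogonal to the crossing-edge stalks, so that no cross terms arise in $\|\delta_c z\|^2$.
\item Such a factorization $B_a$ exists, for example $B_a=H_a^{1/2}$ or a Cholesky factor, because $H_a$ is positive semidefinite as a congruence of $L_a\succeq 0$. This existence is not needed for the stated implication, but it shows the hypothesis is never vacuous.
\end{itemize}
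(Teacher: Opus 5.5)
Your proof is correct and follows the same route as the paper: split $\|\delta_c z\|^2$ over the orthogonal crossing and loop factors, rewrite $\|B_a z_a\|^2 = z_a^\ast H_a z_a$, invoke Theorem~\ref{thm:galerkin_energy_decomposition}, and conclude from the equality of the quadratic forms of two self-adjoint operators. The paper additionally presents the loop term as a genuine loop-cell with incidence maps $\pm\tfrac{1}{2}B_a$; this is interpretive only and is not needed for the identity itself.
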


The proof is given in Appendix~\ref{app:proof_Galerkin_realization}.

The local term in Theorem~\ref{thm:Galerkin_realization} recovers the
energy of modes that are retained in the coarse stalk but still have nonzero
internal sheaf energy. It does not recover modes discarded by truncation: if
$r_a<N_a$, the discarded coefficients are not variables in the coarse space.

\begin{corollary}[Truncation and realization errors]
\label{cor:truncation_realization_errors}
For local spectral pooling with $R_a=U_a$, the approximation introduced by
coarsening separates into two distinct terms.

First, if $r_a<N_a$, the projection onto the coarse stalk discards the modes
$\{\phi_i^a:i>r_a\}$. For a fine local cochain
$x_a=\sum_{i=1}^{N_a}\alpha_i^a\phi_i^a$, the discarded internal energy is
\begin{equation}
    \sum_{i>r_a}
    \lambda_i^a|\alpha_i^a|^2 .
    \label{eq:loss_accounting_truncation}
\end{equation}
This is a dimension-reduction loss.

Second, after the coarse stalk has been fixed, omitting the local residual
operator $U_a^\ast L_aU_a$ removes the retained internal energy
\begin{equation}
    \sum_{i=1}^{r_a}
    \lambda_i^a|z_{a,i}|^2 .
    \label{eq:loss_accounting_realization}
\end{equation}
\end{corollary}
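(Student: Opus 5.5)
The corollary combines Proposition~\ref{prop:truncation_loss} with Theorem~\ref{thm:galerkin_energy_decomposition}, specialized to $R_a=U_a$. I will treat its two claims separately and then check that they concern orthogonal pieces of the fine cochain space, so the two losses are genuinely distinct.

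\textbf{Truncation.} For a fine local cochain $x_a=\sum_i\alpha_i^a\phi_i^a$, the pooling map gives $U_a^\ast x_a=(\alpha_1^a,\ldots,\alpha_{r_a}^a)$, since the $\phi_i^a$ are orthonormal. The coefficients with $i>r_a$ are therefore absent from $V_a$. By Proposition~\ref{prop:truncation_loss}, the discarded component is $x_a^{\mathrm{disc}}=\sum_{i>r_a}\alpha_i^a\phi_i^a$, and its internal energy is $\sum_{i>r_a}\lambda_i^a|\alpha_i^a|^2$; this is \eqref{eq:loss_accounting_truncation}. To justify calling it a loss of the total internal energy, I will note that $L_a$ commutes with the spectral projector $U_aU_a^\ast$. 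Hence the cross terms vanish:
\[
x_a^\ast L_a x_a=(x_a^{\mathrm{ret}})^\ast L_a x_a^{\mathrm{ret}}+(x_a^{\mathrm{disc}})^\ast L_a x_a^{\mathrm{disc}},
\]
so the internal energy splits exactly into a retained part and a discarded part. Moreover, no operator acting on $V_a$ can recover the discarded part, because $R_a=U_a$ lifts only into $\operatorname{span}\{\phi_1^a,\ldots,\phi_{r_a}^a\}$.

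\textbf{Realization.} Theorem~\ref{thm:galerkin_energy_decomposition} gives $z^\ast L_{\mathrm{Gal}}z=\|\delta_{\mathrm{cross}}z\|^2+\sum_a z_a^\ast U_a^\ast L_aU_a z_a$. Using \eqref{eq:retained_internal_energy_spectral}, the second term equals $\sum_a\sum_{i\le r_a}\lambda_i^a|z_{a,i}|^2$. If we keep only the crossing coboundary, i.e.\ drop $\delta_{\mathrm{loop}}$ from Theorem~\ref{thm:Galerkin_realization}, the difference $z^\ast L_{\mathrm{Gal}}z-\|\delta_{\mathrm{cross}}z\|^2$ is exactly this quantity, which gives \eqref{eq:loss_accounting_realization} cluster by cluster.

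Finally, I will relate the two losses for a fine cochain pooled and then lifted. Take $z_a=U_a^\ast x_a$, so that $z_{a,i}=\alpha_i^a$ for $i\le r_a$. The internal energy of $x_a$ then equals \eqref{eq:loss_accounting_realization} plus \eqref{eq:loss_accounting_truncation}, which shows the two terms partition it.

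The only delicate point is interpretive rather than technical: the two losses are measured on different objects, a fine cochain $x_a$ versus a coarse cochain $z_a$. The substitution $z_a=U_a^\ast x_a$ is what makes the statement that coarsening error "separates" precise. Everything else is routine orthonormal-eigenbasis bookkeeping.
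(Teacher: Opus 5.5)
Your proposal is correct and follows essentially the same route as the paper, which reads off the first term directly from Proposition~\ref{prop:truncation_loss} and identifies the second as the retained internal energy $U_a^\ast L_aU_a$ in the Galerkin decomposition, omitted by a crossing-edge-only realization and restored by Theorem~\ref{thm:Galerkin_realization}. Your extra step goes slightly beyond the paper, which leaves the word ``separates'' informal: you use that $L_a$ commutes with $U_aU_a^\ast$ and substitute $z_a=U_a^\ast x_a$, which shows that the two terms exactly partition $x_a^\ast L_a x_a$.
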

\begin{proof}
The first term is exactly the discarded internal energy from
Proposition~\ref{prop:truncation_loss}. The second term is the retained internal
energy $U_a^\ast L_aU_a$ omitted by a crossing-edge-only realization and recovered
by Theorem~\ref{thm:Galerkin_realization}.
\end{proof}

\subsection{Loopless case}
\label{subsec:loopless_case}

The crossing-edge sheaf realizes the full Galerkin operator without additional
local terms precisely when the retained modes have no internal energy. Namely,
if
\begin{equation}
    \operatorname{im} R_a \subseteq \ker L_a
    \qquad
    \text{for every cluster } C_a,
    \label{eq:loopless_condition}
\end{equation}
then $R_a^\ast L_a R_a=0$ for every $a$. By
Theorem~\ref{thm:galerkin_energy_decomposition}, the Galerkin operator reduces
to
\begin{equation}
    R^\ast L_{\mathcal F}R
    =
    \delta_{\mathrm{cross}}^\ast\delta_{\mathrm{cross}}.
    \label{eq:loopless_exact_realization}
\end{equation}
Thus, the crossing-edge sheaf alone gives an exact loopless realization of the
coarse operator.

For the local spectral construction, condition~\eqref{eq:loopless_condition}
means that only zero-eigenvalue modes of the internal sheaf Laplacians are
retained. Equivalently, the retained local modes are exact local sections inside
each cluster. This case is mathematically clean, but it may be restrictive:
low nonzero modes can encode stable local variation and are discarded only if one
insists on a strictly loopless realization.

A kernel-preservation statement for this loopless regime is given in
Appendix~\ref{app:kernel_preservation_loopless}.

\section{\acrlong{hisp} Layer}
\label{sec:sheafpool}

\subsection{\acrlong{hisp} as \acrlong{src} for Sheaves}
\label{subsec:sheafpool_src}

\gls{hisp} implements the local spectral sheaf coarsening developed in the
previous sections within the \gls{src} framework.
Let $G=(V,E)$ be a graph with $|V|=N$ vertices and let $\mathcal F$ be a
cellular sheaf with stalk dimension $d$. Given $h$ hidden feature channels per
stalk, the node representation produced by a sheaf layer is
$X\in\mathbb R^{N\times dh}$, where each node feature vector contains $h$
channels for each of the $d$ stalk coordinates. \gls{hisp} receives the current
graph, the associated sheaf Laplacian $L_{\mathcal F}$, and a partition of the
vertices into $K$ clusters. The hyperparameter $M$ denotes the number of
retained local spectral modes per cluster and therefore determines the dimension
of the coarse stalk.

The output is a coarsened graph $G_c=(V_c,E_c)$ with $|V_c|=K$, together with
pooled stalk-valued features $X_c\in\mathbb R^{K\times Mh}$. Equivalently, each
coarse vertex is associated with a stalk of dimension $M$, carrying $h$ hidden
feature channels. The resulting coarse representation serves as the input to
the subsequent sheaf diffusion layer. The pooling operation is decomposed into
three stages: Select, Reduce, and Connect.

\subsubsection{Select}
\label{subsub:select}

The Select stage identifies the collection of clusters that will become the
vertices of the coarsened graph. Let
\[
V=C_1\sqcup\cdots\sqcup C_K
\]
be a partition of the fine vertex set into $K$ disjoint clusters. Equivalently,
the partition can be represented by a hard assignment matrix
\[
S\in\{0,1\}^{N\times K},
\]
where $S_{i,a}=1$ if and only if vertex $i$ belongs to cluster $C_a$.

The theoretical construction developed in the previous sections assumes only
the existence of such a partition and is otherwise agnostic to the selection
strategy. Any graph partitioning method may therefore be used, including
spectral clustering\citep{von2007tutorial}, METIS , \acrlong{graclus}\citep{dhillon2007weighted}, or task-specific clustering procedures.

In our implementation, the partition is precomputed using spectral clustering
on the ordinary graph adjacency. Consequently, the selection stage depends only
on the graph topology and is independent of the sheaf structure. Once the
partition has been fixed, the clusters define the local sheaf domains on which
the internal sheaf Laplacians and local spectral bases are constructed during
the reduction stage.

The output of Select is the partition
\[
\mathcal C=\{C_1,\ldots,C_K\},
\]
or equivalently the associated assignment matrix $S$.

\subsubsection{Reduce}
\label{subsub:reduce}

Given the partition produced by Select, the Reduce stage constructs the
coarse stalk features cluster by cluster. Let $C_a$ be one selected cluster.
The fine feature matrix is stored as
$X\in\mathbb{R}^{N\times(dh)}$, where $d$ is the current stalk dimension and
$h$ is the number of hidden channels per stalk. For the reduction step, we view
it as a lifted feature matrix
\[
    X_{\mathrm{lifted}}\in\mathbb{R}^{(Nd)\times h}.
\]
The feature block associated with cluster $C_a$ is obtained by selecting the
stalk coordinates of all vertices in $C_a$, giving
\[
    X_a\in\mathbb{R}^{(|C_a|d)\times h}.
\]

The reducer also restricts the current sheaf Laplacian to the same lifted
cluster indices. In the implementation, this is done by extracting the principal
block of the precomputed lifted sheaf Laplacian $L_{\mathcal F}$:
\[
    L_a
    =
    L_{\mathcal F}[C_a,C_a].
\]

This block represents the internal sheaf Laplacian used for the local spectral
projection. We then compute the first $M$ low-frequency eigenvectors of $L_a$,
collected in
\[
    U_a
    =
    [\phi_1^a,\ldots,\phi_M^a]
    \in
    \mathbb{R}^{(|C_a|d)\times M}.
\]

The pooled feature for cluster $C_a$ is obtained by projecting the local feature
block onto this basis:
\[
    X_a^{\mathrm{pool}}
    =
    U_a^\ast X_a
    \in
    \mathbb{R}^{M\times h}.
\]
Thus, the $M$ retained modes define the coarse stalk coordinates of the
supernode associated with $C_a$. Stacking the projected coefficients over all
clusters yields the pooled feature representation
\[
    X_{\mathrm{pool}}
    \in
    \mathbb{R}^{K\times(Mh)}.
\]
Equivalently, each of the $K$ coarse vertices carries a stalk of dimension $M$
with $h$ hidden feature channels.
Figure~\ref{fig:hisps} illustrates the Reduce step: the coarse stalk associated with a cluster is obtained by retaining the low-frequency local modes of the internal sheaf Laplacian.

\begin{figure}[t]
    \centering
    \includegraphics[width=\linewidth]{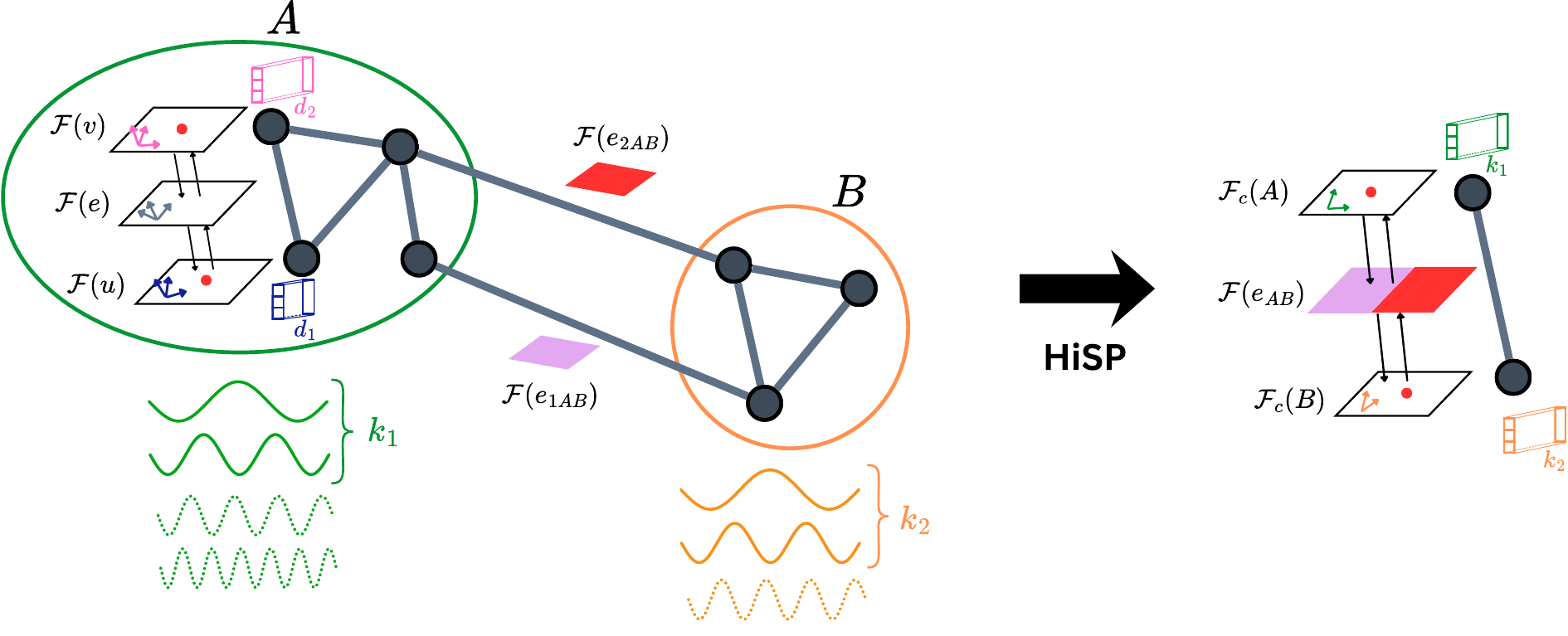}
    \caption{\textbf{Overview of HiSP.}
    Given a cellular sheaf $(\mathcal{F})$ on a fine graph, HiSP first partitions the base graph into clusters, here $(A)$ and $(B)$. Inside each cluster, the sheaf is restricted to the induced subgraph and the corresponding restricted sheaf Laplacian is eigendecomposed. The fine stalk-valued signal on each cluster is then projected onto the subspace spanned by the retained low-frequency modes: for cluster $(A)$, if $(U_A=[\phi^A_1,\ldots,\phi^A_{k_1}])$, the coarse coordinates are $(z_A=U_A^\ast x_A)$. The coarse stalk $(\mathcal{F}_c(A)\simeq \mathbb R^{k_1}$) therefore stores spectral coefficients of locally smooth sheaf modes; analogously, $(\mathcal{F}_c(B)\simeq \mathbb R^{k_2})$. The retained bases define prolongation maps $(U_A)$ and $(U_B)$, which lift coarse coordinates back to fine stalk-valued cochains. Fine edges crossing between clusters induce a coarse edge stalk $(\mathcal{F}_c(e_{AB}))$, given by the direct sum, or concatenation, of the corresponding fine edge stalks. The resulting coarse sheaf compresses each cluster through locally smooth spectral coordinates while retaining the inter-cluster structure of the original sheaf.}
    \label{fig:hisps}
\end{figure}

\subsubsection{Connect}
\label{subsub:connect}

The Connect stage constructs the topology of the coarsened graph from the
assignment produced by Select. Given the hard assignment matrix
$S\in\{0,1\}^{N\times K}$ and the fine adjacency matrix $A$, we define the coarse
adjacency by the standard hard-coarsening rule
\[
    A_c
    =
    S^\top A S .
\]
Equivalently, each fine edge $(i,j)\in E$ induces a coarse edge between the
clusters containing its endpoints. Multiple fine edges inducing the same coarse
edge are aggregated into a single weighted edge. The output of Connect is
therefore a coarse graph
\[
    G_c=(V_c,E_c),
    \qquad |V_c|=K,
\]
together with optional coarse edge weights. Ordinary graph self-loops generated
by the coarsening step may be kept or removed as a post-processing choice.

This step constructs only the topology of the coarse base graph. It does not
explicitly build the induced coarse restriction maps, the crossing-edge coboundary, or the Galerkin sheaf operator $R^\ast L_{\mathcal F}R$. These
objects belong to the operator-level coarsening framework developed in Section~\ref{sec:coarse_operator_sheaf_laplacian}.

\subsection{Architectures}
\label{subsec:architectures}

\gls{hisp} can be used in both graph-level and node-level learning settings. The
two cases differ in how the coarsened representation is used after pooling. For
graph-level tasks, pooling is used hierarchically to build progressively smaller
graph representations before a final readout. For node-level tasks, pooling is used
inside an encoder--decoder architecture, where the coarse representation is lifted
back to the original graph resolution before producing node-level predictions.

\begin{figure}[t]
    \centering
    \includegraphics[width=\linewidth]{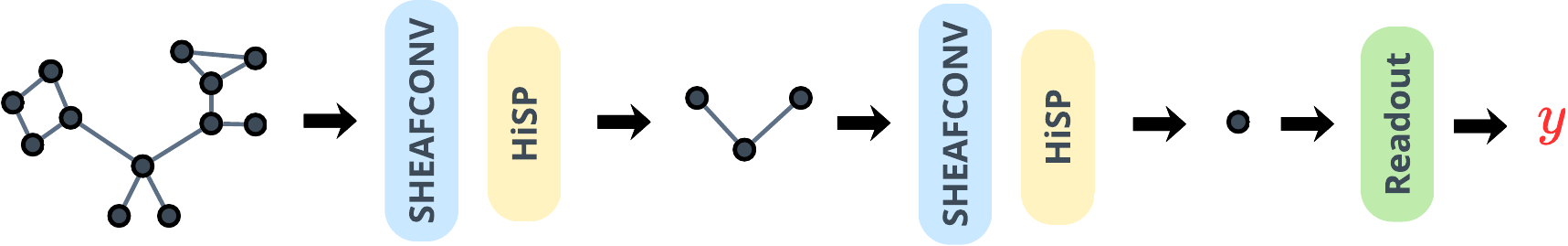}
    \caption{\textbf{Graph-level architecture.}
    The model alternates sheaf convolutional layers and \gls{hisp} layers to build
    a hierarchy of coarsened graphs. After the final pooling stage, a graph-level
    readout aggregates the remaining coarse node features into a graph
    representation used for prediction.}
    \label{fig:graph-level-architecture}
\end{figure}

\paragraph{Graph-level architecture.}
For graph-level tasks, such as graph classification, we use \gls{hisp} as a
hierarchical pooling operator. Starting from an input graph $G=(V,E)$ equipped
with a sheaf $\mathcal F$, a sheaf convolutional layer first updates the
stalk-valued node features using the current sheaf Laplacian. \gls{hisp} then
reduces the graph by selecting clusters, projecting the features onto local
low-frequency sheaf modes, and constructing a coarsened graph topology. This
process can be repeated over multiple levels (Figure~\ref{fig:graph-level-architecture}):
\[
    (G_0,\mathcal F_0,X_0)
    \longrightarrow
    (G_1,X_1)
    \longrightarrow
    \cdots
    \longrightarrow
    (G_L,X_L),
\]
where each $G_{\ell+1}$ is a coarsening of $G_\ell$ and $X_{\ell+1}$ contains the
pooled coarse-stalk features. A final permutation-invariant readout is then applied
to the last coarse graph representation to obtain a graph embedding for
classification or regression.

In this setting, \gls{hisp} plays the same architectural role as graph pooling in
standard hierarchical \glspl{gnn}, but the reduction step is sheaf-aware: the pooled
features are not obtained by mean or max aggregation, but by projection onto
low-energy modes of the cluster-internal sheaf Laplacian.

\begin{figure}[t]
    \centering
    \includegraphics[width=\linewidth]{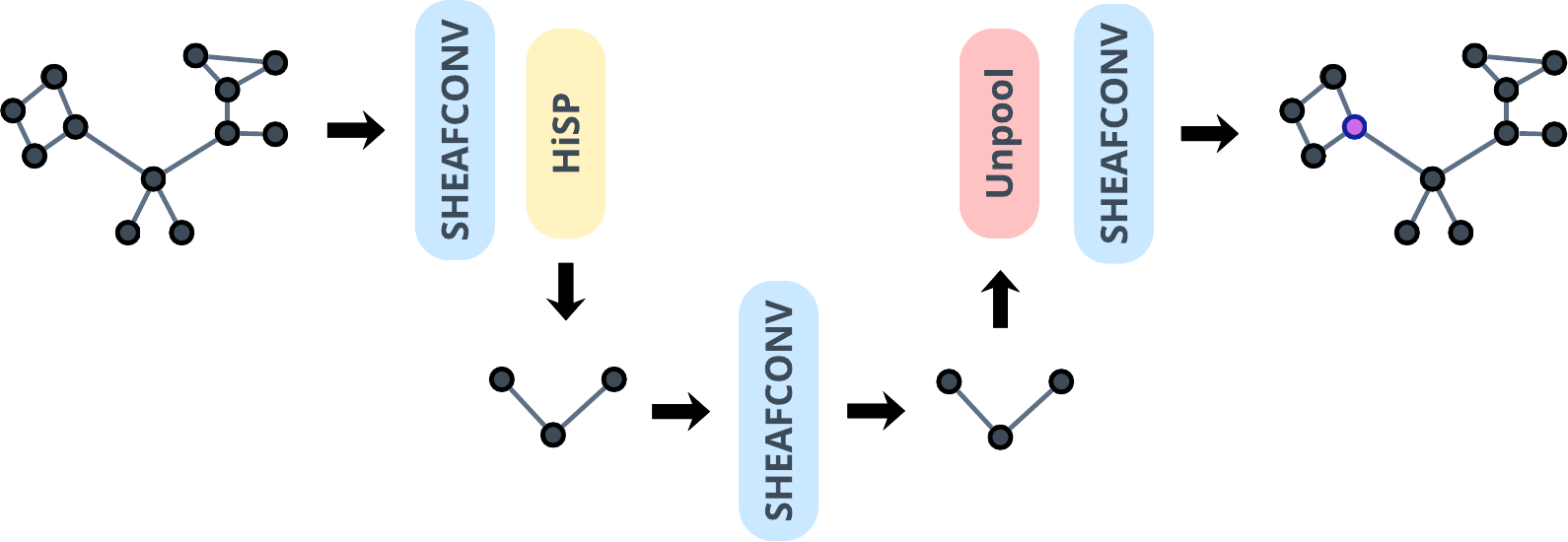}
    \caption{\textbf{Node-level architecture.}
    The model uses \gls{hisp} inside an encoder--decoder pipeline. The encoder
    coarsens the graph and computes coarse sheaf-aware features; the decoder
    lifts the representation back to the original graph resolution before applying
    a final sheaf convolutional layer for node-level prediction.}
    \label{fig:node-level-architecture}
\end{figure}

\paragraph{Node-level architecture.}
For node-level tasks, the output must be defined on the original vertex set.
Therefore, we use \gls{hisp} in an encoder--decoder architecture. The encoder first
applies a sheaf convolutional layer and then pools the graph to obtain a compact
coarse representation. Additional sheaf convolutional layers may be applied at the
coarse level. The decoder then unpools the coarse features back to the original
resolution using the stored pooling assignment, after which a final sheaf
convolutional layer produces node-level predictions.

This architecture is useful when one wants to benefit from a multiresolution
representation while preserving node-level outputs. The pooling stage captures
low-energy local sheaf structure, while the unpooling stage restores the original
node indexing required for node classification or reconstruction losses.

\paragraph{Sheaf-aware coarsening across scales.}

A key feature of \gls{hisp} is that, in principle, it can be used to construct a multiresolution representation of a given sheaf rather than relying on a newly learned sheaf at every layer. The reduction maps are derived directly from the fine-level sheaf Laplacian. More precisely, for each cluster $C_a$, the retained local eigenvectors of the internal sheaf Laplacian define a prolongation map \[ R_a : \mathbb R^{M} \to C^0(G[C_a];\mathcal F), \] whose adjoint $R_a^\ast$ performs the local pooling operation. Thus, the pooling layer coarsens the original sheaf cochain space rather than merely aggregating features over the base graph. This perspective makes the construction applicable in settings where the sheaf is fixed, precomputed, or otherwise not learned throughout the network. In such cases, the same underlying sheaf structure can be reduced across scales in a principled manner using the associated sheaf Laplacian. At the same time, the framework remains compatible with neural sheaf architectures in which restriction maps are learned and updated across layers. In both settings, the role of \gls{hisp} is to use the sheaf Laplacian to define the pooling map, preserving the low-energy local structure encoded by the original sheaf.

\subsection{Implementation}

The implementation is available online\footnote{\url{https://github.com/NGMLGroup/hierarchical-pooling-for-sheaf-neural-networks}}.
The code builds on \gls{pyg} and mirrors the precompute-and-apply data flow used in Torch Geometric Pool \citep{abate2026torchgeometricpoolpytorch}. We use a sheaf pretransform that first canonicalizes the graph edges for sheaf diffusion, then stores topology-derived tensors: restriction-map lookup indices, sparse Laplacian block indices, degree information, and the sheaf parametrization used by the layer. The runtime operator can then learn only the feature-dependent restriction maps and fill the sparse Laplacian values without rebuilding the graph-dependent layout.

To support hierarchical sheaf architectures, we implement a graph-agnostic \gls{pyg} realization of discrete \gls{nsd} \citep{bodnar2022neural}. We denote the resulting layer by \textsc{SheafConv}. It refactors the original \gls{nsd} update into a reusable \gls{mp}-style module whose trainable parameters are independent of the size and topology of the input graph.

\textsc{SheafConv} separates graph-dependent topology from feature-dependent sheaf geometry. At runtime, the layer learns the restriction maps from node features, fills the sparse Laplacian values, and applies the discrete sheaf diffusion update. The same operator can therefore run on individual graphs, mini-batches of variable-size graphs, and coarsened graphs produced by a hierarchical architecture.

The layer also exposes the learned sheaf Laplacian as an intermediate object. This supports the linear-reuse setting of \gls{nsd}, where the same Laplacian is reused across diffusion steps, and provides the operator required by \gls{hisp} to compute local spectral reductions. After a pooling step, the coarsened graph receives its own lifted metadata and can be processed by the same \textsc{SheafConv} module at the next resolution.





\section{Conclusions}

This paper develops a first principled framework for pooling cellular sheaves in neural architectures. The starting point is the observation that sheaf coarsening cannot be reduced to ordinary graph coarsening. A node assignment specifies how vertices are grouped, but it does not specify how stalk-valued coarse signals should be represented on the original sheaf. The central object is therefore a cochain-level prolongation map from the coarse space to the fine sheaf cochain space. Once this map is fixed, the Galerkin operator$ (R^\ast L_{\mathcal{F}}R)$ provides the natural coarse energy associated with the retained subspace.

We realize this framework using a local spectral coarsening construction. For each cluster, we restrict the sheaf to the induced subgraph, compute the internal sheaf Laplacian, and retain its low-frequency eigenmodes. The corresponding coarse stalk stores the coefficients of locally smooth sheaf modes, rather than aggregated node features. This construction yields a local pooling map given by the adjoint of the retained basis and a prolongation map given by the basis itself. In this sense, the pooling operation acts on the sheaf cochain space, not only on the base graph.

We also clarify the operator-level structure of the resulting coarse model. Crossing edges between clusters induce a canonical coarse edge stalk and recover the inter-cluster part of the fine sheaf energy. The full Galerkin energy, however, may also contain internal energy from retained local modes. This leads to a separation between two distinct sources of approximation: truncation loss, caused by discarding local spectral modes, and realization loss, caused by representing the projected operator using only the coarse sheaf structure. The loop-cell or vertex-potential construction shows how the full Galerkin operator can be realized exactly once the retained internal energy is included.

Finally, we realize these theoretical constructions in the form of \gls{hisp}, a \gls{src} pooling layer compatible with \glspl{snn}. To support this layer, we provide a \gls{pyg} implementation of \gls{nsd} that handles batching, lifted sheaf Laplacians, and variable-size graphs.

Overall, the contribution of this work is not limited to a single pooling layer. \gls{hisp} is one concrete realization of a more general template for sheaf pooling: specify a coarse cochain space, define a prolongation map into the fine sheaf, use the induced Galerkin operator to obtain the corresponding coarse energy, and then decide how this operator should be realized or approximated by a coarse sheaf. By identifying these ingredients, the paper moves \glspl{snn} beyond fixed-resolution diffusion and provides a foundation for future pooling layers based on different partitions, local bases, learned reductions, or approximate coarse realizations.

\newpage

\bibliographystyle{unsrtnat}
\bibliography{sheafpool}

\newpage

\appendix

\section*{Appendix}

\section{Additional Background on Cellular Sheaves}
\label{app:sheaf-background}

This appendix expands the basic definitions from Section~\ref{subsec:sheaves} and records several facts used throughout the paper.

\subsection{Block Form of the Sheaf Laplacian}
\label{app:block-form}

Let $G=(V,E)$ be a finite undirected graph equipped with a cellular sheaf $\mathcal F$. After choosing an arbitrary orientation for each edge, the coboundary operator
\[
    \delta_{\mathcal F}:C^0(G;\mathcal F)\to C^1(G;\mathcal F)
\]
is defined edge-wise by
\[
    (\delta_{\mathcal F}x)_e
    =
    \mathcal F_{u\to e}x_u-\mathcal F_{v\to e}x_v,
    \qquad e=(u,v).
\]
The sheaf Laplacian is
\[
    L_{\mathcal F}=\delta_{\mathcal F}^{\ast}\delta_{\mathcal F}.
\]
Although the definition depends on an arbitrary orientation of the edges, the Laplacian itself does not.

The matrix of $L_{\mathcal F}$ has a natural block structure indexed by vertices. For a vertex $v$, the diagonal block is
\begin{equation}
    (L_{\mathcal F})_{vv}
    =
    \sum_{v\triangleleft e}
    \mathcal F_{v\to e}^{\ast}\mathcal F_{v\to e}.
    \label{eq:app-diagonal-block}
\end{equation}
For two adjacent vertices $u$ and $v$ connected by an edge $e=(u,v)$, the off-diagonal block is
\begin{equation}
    (L_{\mathcal F})_{uv}
    =
    -\mathcal F_{u\to e}^{\ast}\mathcal F_{v\to e}.
    \label{eq:app-off-diagonal-block}
\end{equation}
If $u$ and $v$ are not adjacent, the corresponding block is zero. Therefore, the sparsity pattern of $L_{\mathcal F}$ follows the graph topology, while the block entries encode the restriction maps of the sheaf.

This makes the sheaf Laplacian richer than an ordinary graph Laplacian. In a graph Laplacian, an edge only specifies that two scalar node values should become close. In a sheaf Laplacian, an edge specifies how two local vectors should be transformed before being compared.

\subsection{Energy and Global Sections}
\label{app:energy-global-sections}

The quadratic form of the sheaf Laplacian is
\begin{equation}
    x^{\ast}L_{\mathcal F}x
    =
    \sum_{e=(u,v)\in E}
    \left\|
    \mathcal F_{u\to e}x_u-
    \mathcal F_{v\to e}x_v
    \right\|^2.
    \label{eq:app-sheaf-energy}
\end{equation}
Thus, $L_{\mathcal F}$ measures the total inconsistency of a vertex cochain across all edges.

The zero-energy cochains satisfy
\[
    \mathcal F_{u\to e}x_u=
    \mathcal F_{v\to e}x_v
    \qquad
    \text{for every } e=(u,v)\in E.
\]
These are the global sections of the sheaf:
\begin{equation}
    H^0(G;\mathcal F)
    =
    \ker \delta_{\mathcal F}
    =
    \ker L_{\mathcal F}.
    \label{eq:app-global-sections}
\end{equation}
A global section is not necessarily a signal that is constant on the graph. Rather, it is a signal whose local vertex values become compatible after restriction to every edge stalk. This distinction is important in applications such as heterophilic graphs, where adjacent nodes may not be expected to have similar raw features but may still satisfy a learned compatibility relation.

\subsection{Normalized Sheaf Laplacian}
\label{app:normalized-sheaf-laplacian}

In neural architectures, one often uses a normalized sheaf Laplacian. Let $D_{\mathcal F}$ denote the block-diagonal part of $L_{\mathcal F}$:
\begin{equation}
    (D_{\mathcal F})_{vv}
    =
    (L_{\mathcal F})_{vv}
    =
    \sum_{v\triangleleft e}
    \mathcal F_{v\to e}^{\ast}\mathcal F_{v\to e}.
    \label{eq:app-sheaf-degree}
\end{equation}
When these blocks are invertible, the normalized sheaf Laplacian is
\begin{equation}
    \Delta_{\mathcal F}
    =
    D_{\mathcal F}^{-1/2}
    L_{\mathcal F}
    D_{\mathcal F}^{-1/2}.
    \label{eq:app-normalized-sheaf-laplacian}
\end{equation}
This is the sheaf analogue of the normalized graph Laplacian. It is commonly used in \gls{nsd} because normalization improves numerical stability and controls the spectrum of the diffusion operator.

If some diagonal blocks are singular, one may use a regularized or augmented normalization, for example by replacing $D_{\mathcal F}$ with $D_{\mathcal F}+\epsilon I$ or with an augmented degree operator. Such choices are implementation details and do not affect the conceptual role of the sheaf Laplacian as an inconsistency operator.

\subsection{Ordinary Graph Laplacians as a Special Case}
\label{app:ordinary-laplacian-special-case}

The ordinary weighted graph Laplacian is recovered as a special case of the sheaf Laplacian. Suppose all vertex and edge stalks are one-dimensional:
\[
    \mathcal F(v)=\mathbb R,
    \qquad
    \mathcal F(e)=\mathbb R.
\]
For an edge $e=(u,v)$ with weight $w_e>0$, set
\begin{equation}
    \mathcal F_{u\to e}
    =
    \mathcal F_{v\to e}
    =
    \sqrt{w_e}.
    \label{eq:app-scalar-restrictions}
\end{equation}
Then
\[
    (\delta_{\mathcal F}x)_e
    =
    \sqrt{w_e}(x_u-x_v),
\]
and therefore
\begin{equation}
    x^{\top}L_{\mathcal F}x
    =
    \sum_{e=(u,v)\in E}
    w_e(x_u-x_v)^2.
    \label{eq:app-weighted-graph-energy}
\end{equation}
This is precisely the quadratic form of the weighted graph Laplacian.

Thus, cellular sheaves do not replace graph Laplacians; they generalize them. Ordinary graph diffusion assumes that adjacent scalar values should agree directly. Sheaf diffusion allows adjacent vertex values to live in local vector spaces and compares them only after applying edge-dependent restriction maps.

\subsection{Connection Laplacians}
\label{app:connection-laplacians}

Another important special case is the connection Laplacian. Suppose all vertex and edge stalks are $\mathbb R^d$, and let each edge $e=(u,v)$ be equipped with an orthogonal transport map $U_{uv}\in O(d)$. A common construction is
\begin{equation}
    \mathcal F_{u\to e}
    =
    \sqrt{w_e}I_d,
    \qquad
    \mathcal F_{v\to e}
    =
    \sqrt{w_e}U_{uv}.
    \label{eq:app-connection-restrictions}
\end{equation}
Then the edge contribution to the sheaf energy becomes
\begin{equation}
    w_e
    \left\|
    x_u-U_{uv}x_v
    \right\|^2.
    \label{eq:app-connection-energy}
\end{equation}
In this setting, the graph carries not only adjacency information but also a rule for transporting vectors between neighboring stalks. This is analogous to parallel transport in differential geometry and is one reason why sheaves are often described as equipping a graph with a richer geometry.

\subsection{Sheaf Diffusion}
\label{app:sheaf-diffusion}

Let $X\in\mathbb R^{(nd)\times f}$ be a matrix of sheaf-valued features, where $d$ is the stalk dimension and $f$ is the number of feature channels. Sheaf diffusion is governed by
\begin{equation}
    \dot X(t)
    =
    -\Delta_{\mathcal F}X(t).
    \label{eq:app-sheaf-diffusion}
\end{equation}
The solution evolves each feature channel toward the harmonic space of the normalized sheaf Laplacian. Informally, diffusion reduces the incompatibility measured by the sheaf energy. In the limit, the signal is projected onto the space of global sections, up to the normalization used in $\Delta_{\mathcal F}$.

This interpretation explains the role of the sheaf Laplacian in neural architectures. A classical \gls{gcn} can be viewed as a discretized and parametrized graph heat diffusion process. Neural sheaf models replace the graph Laplacian with a sheaf Laplacian, allowing the model to learn not only how strongly neighboring nodes interact, but also how their local feature spaces should be aligned before comparison.

\subsection{Why This Matters for Pooling}
\label{app:why-pooling-is-different}

In ordinary graph pooling, a node assignment matrix $S$ often suffices to define a coarse graph and a coarse signal. For scalar graph signals, the Galerkin expression
\[
    L_c=S^{\top}LS
\]
is meaningful because $S$ directly maps coarse scalar signals to fine scalar signals.

For sheaves, the situation is different. A sheaf signal lives in
\[
    C^0(G;\mathcal F)=\bigoplus_{v\in V}\mathcal F(v),
\]
not merely in $\mathbb R^n$. Therefore, a pooling operator must specify how a coarse stalk value lifts to a collection of fine stalk values inside each cluster. In other words, sheaf pooling requires a cochain-level prolongation map
\[
    R:C^0(G_c;\mathcal F_c)\to C^0(G;\mathcal F).
\]
Only after such a map is defined does a Galerkin coarse operator
\[
    L_c=R^{\ast}L_{\mathcal F}R
\]
become well-defined.

This is the motivation for the local spectral coarsening framework developed in the main text. The core problem is not only how to coarsen the base graph, but how to coarsen the sheaf-valued signal space and the sheaf energy in a way that is compatible with the learned restriction maps.

\subsection{Additional Background on \acrlong{nsd}}
\label{app:nsd-background}

This appendix provides additional details on \gls{nsd} and its relation to graph diffusion and sheaf convolutional networks.

\paragraph{From graph heat diffusion to sheaf diffusion.}
Let $G=(V,E)$ be a graph with adjacency matrix $A$, degree matrix $D$, and normalized graph Laplacian
\begin{equation}
    \Delta_0
    =
    I_n - D^{-1/2}AD^{-1/2}.
    \label{eq:app-graph-normalized-laplacian}
\end{equation}
For node features $X\in \mathbb R^{n\times f}$, graph heat diffusion is
\begin{equation}
    \dot X(t)
    =
    -\Delta_0 X(t).
    \label{eq:app-graph-heat-diffusion}
\end{equation}
A unit-step Euler discretization gives
\begin{equation}
    X(t+1)
    =
    X(t)-\Delta_0X(t)
    =
    (I_n-\Delta_0)X(t).
    \label{eq:app-euler-graph-diffusion}
\end{equation}
This perspective connects classical \gls{gcn} layers to discretized diffusion: a \gls{gcn} can be viewed as a graph diffusion step augmented with learnable weights and a nonlinearity.

Sheaf diffusion replaces the ordinary graph Laplacian with a sheaf Laplacian. Suppose each vertex stalk has dimension $d$, and let $X\in\mathbb R^{(nd)\times f}$ collect $f$ sheaf-valued feature channels. Given a normalized sheaf Laplacian $\Delta_{\mathcal F}$, continuous sheaf diffusion is
\begin{equation}
    X(0)=X,
    \qquad
    \dot X(t)
    =
    -\Delta_{\mathcal F}X(t).
    \label{eq:app-continuous-sheaf-diffusion}
\end{equation}
The key difference is the meaning of smoothness. Graph diffusion encourages neighboring node features to become similar in the same feature space. Sheaf diffusion encourages neighboring node features to become compatible after applying the corresponding restriction maps.

\paragraph{Harmonic limit.}
The long-time behavior of sheaf diffusion is governed by the harmonic space of the sheaf Laplacian. In the limit, each feature channel is projected onto
\begin{equation}
    \ker(\Delta_{\mathcal F}),
    \label{eq:app-harmonic-space-normalized}
\end{equation}
which corresponds, up to normalization, to the space of global sections of the sheaf. Therefore, the diffusion process can be interpreted as a synchronization process over local vector spaces: vertex features evolve toward configurations that agree through the restriction maps of the sheaf.

This is important for heterophilic graphs. Ordinary diffusion tends to wash out differences between adjacent nodes. Sheaf diffusion can instead learn transformations under which adjacent features become compatible without becoming identical.

\paragraph{Sheaf convolutional networks.}
A sheaf convolutional layer can be obtained by augmenting a discretized sheaf diffusion step with learnable weights and a nonlinearity. Given $X\in\mathbb R^{(nd)\times f_1}$, one may write
\begin{equation}
    Y
    =
    \sigma\!\left(
    (I_{nd}-\Delta_{\mathcal F})
    (I_n\otimes W_1)
    XW_2
    \right)
    \in
    \mathbb R^{(nd)\times f_2},
    \label{eq:app-sheaf-convolution}
\end{equation}
where $W_1\in\mathbb R^{d\times d}$ acts on the stalk dimension, $W_2\in\mathbb R^{f_1\times f_2}$ acts on feature channels, and $\sigma$ is a nonlinearity. If the sheaf is trivial, namely all stalks are $\mathbb R$ and all restriction maps are identities, then $\Delta_{\mathcal F}$ reduces to the normalized graph Laplacian and the layer recovers the usual graph-diffusion form underlying \glspl{gcn}.

\paragraph{Sheaf Dirichlet energy.}
For a normalized sheaf Laplacian, the sheaf Dirichlet energy of a cochain $x\in C^0(G;\mathcal F)$ is
\begin{equation}
    E_{\mathcal F}(x)
    =
    x^\top \Delta_{\mathcal F}x.
    \label{eq:app-dirichlet-energy-single}
\end{equation}
Equivalently,
\begin{equation}
    E_{\mathcal F}(x)
    =
    \frac{1}{2}
    \sum_{e=(u,v)\in E}
    \left\|
    \mathcal F_{u\to e}D_u^{-1/2}x_u
    -
    \mathcal F_{v\to e}D_v^{-1/2}x_v
    \right\|^2.
    \label{eq:app-dirichlet-energy-edge}
\end{equation}
For multiple feature channels,
\begin{equation}
    E_{\mathcal F}(X)
    =
    \operatorname{tr}
    \left(
    X^\top \Delta_{\mathcal F}X
    \right).
    \label{eq:app-dirichlet-energy-matrix}
\end{equation}
This energy measures the distance of a signal from the harmonic space. In particular,
\begin{equation}
    E_{\mathcal F}(x)=0
    \quad
    \Longleftrightarrow
    \quad
    x\in \ker(\Delta_{\mathcal F}).
    \label{eq:app-zero-energy-kernel}
\end{equation}

\paragraph{Learning the sheaf.}
In many graph-learning problems, the appropriate sheaf structure is unknown. \Gls{nsd} therefore learns the restriction maps from data. At layer or time $t$, the model constructs a sheaf $\mathcal F(t)$ whose restriction maps are functions of node features. For an edge $e=(u,v)$,
\begin{equation}
    \mathcal F^{(t)}_{u\to e}
    =
    \Phi^{(t)}(x_u,x_v),
    \qquad
    \mathcal F^{(t)}_{v\to e}
    =
    \Phi^{(t)}(x_v,x_u),
    \label{eq:app-feature-dependent-restrictions}
\end{equation}
where $\Phi^{(t)}$ is typically an \gls{mlp} followed by a reshaping operation. The two arguments are ordered, so the restriction maps associated with the two endpoints of an edge need not coincide. This allows the induced local transport to be asymmetric and feature-dependent.

The continuous \gls{nsd} model can be written as
\begin{equation}
    \dot X(t)
    =
    -
    \sigma\!\left(
    \Delta_{\mathcal F(t)}
    (I_n\otimes W_1)
    X(t)W_2
    \right).
    \label{eq:app-continuous-nsd}
\end{equation}
In practice, \gls{nsd} is usually implemented as a discrete residual layer:
\begin{equation}
    X^{(t+1)}
    =
    X^{(t)}
    -
    \sigma\!\left(
    \Delta_{\mathcal F(t)}
    (I_n\otimes W_1^{(t)})
    X^{(t)}
    W_2^{(t)}
    \right).
    \label{eq:app-discrete-nsd}
\end{equation}
The sheaf Laplacian is therefore layer-dependent: both the node features and the geometry used for diffusion evolve through the network.

\paragraph{Parametrizations of restriction maps.}
Different \gls{nsd} variants arise from different choices of the matrix-valued function $\Phi$.

\begin{itemize}
    \item \textbf{Diagonal maps.} Each restriction map is diagonal. This reduces the number of parameters and makes the Laplacian cheaper to apply, but different stalk coordinates interact only through the learned weight matrix $W_1$.

    \item \textbf{Orthogonal maps.} Each restriction map is constrained to lie in $O(d)$, producing a discrete vector bundle or connection Laplacian. Orthogonal maps can rotate and mix stalk coordinates while preserving norms. They also lead to better-conditioned normalized Laplacians.

    \item \textbf{General maps.} Each restriction map is a full $d\times d$ matrix. This is the most expressive choice, but it is more expensive and can be harder to normalize and train.
\end{itemize}

\paragraph{Relevance to pooling.}
\Gls{nsd} defines how to propagate features on a fixed graph equipped with a learned, layer-dependent sheaf. Hierarchical architectures require an additional operation: pooling. For ordinary graph diffusion, pooling can be described by coarsening the adjacency and aggregating node features. For \gls{nsd}, this is insufficient because the diffusion operator acts on stalk-valued cochains and depends on restriction-map energies.

Therefore, a sheaf-aware pooling layer must specify not only a coarse graph, but also how coarse cochains lift back to fine cochains and how the sheaf energy is represented after coarsening. This motivates the cochain-level prolongation and local spectral coarsening framework introduced in the main text.

\section{Proof of Theorem~\ref{thm:local_spectral_optimality}}
\label{app:proof_local_spectral_optimality}

The result is an immediate application of the Courant--Fischer min--max theorem \citep{horn2012matrix}.
Since $L_a$ is self-adjoint and positive semidefinite on the finite-dimensional
Hilbert space $C^0(G[C_a];\mathcal F)$, its eigenvalues satisfy
\[
    0\leq \lambda_1^a\leq\cdots\leq \lambda_{N_a}^a,
\]
and it admits an orthonormal eigenbasis
$\{\phi_i^a\}_{i=1}^{N_a}$. The Courant--Fischer theorem gives
\begin{equation}
    \lambda_{r_a}^a
    =
    \min_{\dim W=r_a}
    \;
    \max_{\substack{x\in W\\ \|x\|=1}}
    x^\ast L_a x.
    \label{eq:appendix_courant_fischer}
\end{equation}
The minimizing subspace is the span of the first $r_a$ eigenvectors,
\[
    W^\star
    =
    \operatorname{span}\{\phi_1^a,\ldots,\phi_{r_a}^a\}.
\]
Indeed, for any unit vector
$x=\sum_{i=1}^{r_a}\alpha_i\phi_i^a\in W^\star$, one has
\[
    x^\ast L_a x
    =
    \sum_{i=1}^{r_a}\lambda_i^a|\alpha_i|^2
    \leq
    \lambda_{r_a}^a
    \sum_{i=1}^{r_a}|\alpha_i|^2
    =
    \lambda_{r_a}^a.
\]
The maximum over unit vectors in $W^\star$ is attained by
$x=\phi_{r_a}^a$, giving value $\lambda_{r_a}^a$. This proves that the retained
local spectral subspace minimizes the worst-case internal sheaf energy among all
$r_a$-dimensional subspaces.

\section{Proof of Proposition~\ref{prop:truncation_loss}}
\label{app:proof_truncation_loss}

Since $\{\phi_i^a\}_{i=1}^{N_a}$ is an orthonormal eigenbasis of $L_a$, every
$x_a\in C^0(G[C_a];\mathcal F)$ can be written as
\[
    x_a
    =
    \sum_{i=1}^{N_a}
    \alpha_i^a\phi_i^a .
\]
The operator $U_aU_a^\ast$ is the orthogonal projector onto
$\operatorname{span}\{\phi_1^a,\ldots,\phi_{r_a}^a\}$. Therefore,
\[
    U_aU_a^\ast x_a
    =
    \sum_{i=1}^{r_a}
    \alpha_i^a\phi_i^a,
    \qquad
    (I-U_aU_a^\ast)x_a
    =
    \sum_{i>r_a}
    \alpha_i^a\phi_i^a.
\]
The norm identity follows from orthonormality:
\[
    \|x_a^{\mathrm{disc}}\|^2
    =
    \sum_{i>r_a}
    |\alpha_i^a|^2.
\]
Since $L_a\phi_i^a=\lambda_i^a\phi_i^a$, we also have
\[
    (x_a^{\mathrm{disc}})^\ast
    L_a
    x_a^{\mathrm{disc}}
    =
    \sum_{i>r_a}
    \lambda_i^a|\alpha_i^a|^2.
\]

Finally, if $x_a^\ast L_a x_a\leq E_a$, then
\[
    E_a
    \geq
    x_a^\ast L_a x_a
    =
    \sum_{i=1}^{N_a}
    \lambda_i^a|\alpha_i^a|^2
    \geq
    \sum_{i>r_a}
    \lambda_i^a|\alpha_i^a|^2
    \geq
    \lambda_{r_a+1}^a
    \sum_{i>r_a}
    |\alpha_i^a|^2.
\]
If $\lambda_{r_a+1}^a>0$, dividing by $\lambda_{r_a+1}^a$ gives
\[
    \|x_a^{\mathrm{disc}}\|^2
    =
    \sum_{i>r_a}
    |\alpha_i^a|^2
    \leq
    \frac{E_a}{\lambda_{r_a+1}^a}.
\]
This proves the claim.

\section{Proof of Theorem~\ref{thm:galerkin_energy_decomposition}}
\label{app:proof_galerkin_energy_decomposition}

By construction of the crossing coboundary, for every coarse edge
$\varepsilon_{ab}$ we have
\[
    (\delta_{\mathrm{cross}}z)_{\varepsilon_{ab}}
    =
    \bigoplus_{e=(u,v)\in E_{ab}}
    \left(
    \mathcal F_{u\to e}\,\pi_u^a R_a z_a
    -
    \mathcal F_{v\to e}\,\pi_v^b R_b z_b
    \right).
\]
Since
\[
    \mathcal F_c(\varepsilon_{ab})
    =
    \bigoplus_{e\in E_{ab}}\mathcal F(e),
\]
the squared norm of the crossing coboundary is
\[
    \|\delta_{\mathrm{cross}}z\|^2
    =
    \sum_{a<b}
    \sum_{e=(u,v)\in E_{ab}}
    \left\|
    \mathcal F_{u\to e}\,\pi_u^a R_a z_a
    -
    \mathcal F_{v\to e}\,\pi_v^b R_b z_b
    \right\|^2 .
\]
This is precisely the contribution of the fine coboundary
$\delta_{\mathcal F}(Rz)$ on edges crossing between distinct clusters.

The fine edge set decomposes into the disjoint union of internal cluster edges
and crossing edges. Since edge cochains are direct sums over edges, these
contributions are orthogonal in $C^1(G;\mathcal F)$. Therefore,
\[
    \|\delta_{\mathcal F}Rz\|^2
    =
    \|\delta_{\mathrm{cross}}z\|^2
    +
    \sum_{a=1}^{K}
    \|\delta_a R_a z_a\|^2.
\]
For each cluster,
\[
    \|\delta_a R_a z_a\|^2
    =
    z_a^\ast R_a^\ast \delta_a^\ast\delta_a R_a z_a
    =
    z_a^\ast R_a^\ast L_a R_a z_a.
\]
Substituting this identity gives the energy decomposition
\[
    \|\delta_{\mathcal F}Rz\|^2
    =
    \|\delta_{\mathrm{cross}}z\|^2
    +
    \sum_{a=1}^{K}
    z_a^\ast R_a^\ast L_a R_a z_a.
\]

To obtain the operator identity, define
\[
M=
R^\ast L_{\mathcal F}R
-
\left(
\delta_{\mathrm{cross}}^\ast\delta_{\mathrm{cross}}
+
\bigoplus_{a=1}^{K}R_a^\ast L_aR_a
\right).
\]
The previous equality is equivalent to
\[
    z^\ast M z = 0
    \qquad
    \forall z\in C_c^0.
\]
Moreover, $M$ is self-adjoint, since it is the difference of self-adjoint
operators. Hence $M=0$, and therefore
\[
    R^\ast L_{\mathcal F}R
    =
    \delta_{\mathrm{cross}}^\ast\delta_{\mathrm{cross}}
    +
    \bigoplus_{a=1}^{K}R_a^\ast L_aR_a.
\]
This proves the theorem.

\section{Proof of Theorem~\ref{thm:Galerkin_realization}}
\label{app:proof_Galerkin_realization}

For each cluster $C_a$, define the retained internal energy operator
\[
    H_a=R_a^\ast L_aR_a.
\]
Since $L_a$ is positive semidefinite, $H_a$ is positive semidefinite. Hence we
may choose a factorization
\[
    H_a=B_a^\ast B_a.
\]

We now realize this local positive semidefinite term as a coboundary
contribution. Attach to each coarse vertex $a$ a loop-cell $\ell_a$ with edge
stalk
\[
    \mathcal F_c(\ell_a)=\operatorname{codom}(B_a).
\]
Define two incidence maps from the same coarse stalk $V_a$ to this loop-cell by
\[
    \rho^+_{a\to \ell_a}
    =
    \frac{1}{2}B_a,
    \qquad
    \rho^-_{a\to \ell_a}
    =
    -\frac{1}{2}B_a.
\]
The loop coboundary component is then
\[
    (\delta_{\mathrm{loop}}z)_{\ell_a}
    =
    \rho^+_{a\to \ell_a}z_a
    -
    \rho^-_{a\to \ell_a}z_a
    =
    B_a z_a.
\]
Thus
\[
    \|\delta_{\mathrm{loop}}z\|^2
    =
    \sum_{a=1}^{K}
    \|B_a z_a\|^2
    =
    \sum_{a=1}^{K}
    z_a^\ast B_a^\ast B_a z_a
    =
    \sum_{a=1}^{K}
    z_a^\ast H_a z_a.
\]

Let the augmented coarse coboundary be
\[
    \delta_c z
    =
    \delta_{\mathrm{cross}}z
    \oplus
    \delta_{\mathrm{loop}}z.
\]
Since the crossing-edge components and loop-cell components lie in orthogonal
direct-sum factors of the coarse edge cochain space, we have
\[
    \|\delta_c z\|^2
    =
    \|\delta_{\mathrm{cross}}z\|^2
    +
    \|\delta_{\mathrm{loop}}z\|^2.
\]
Using the previous identity and $H_a=R_a^\ast L_aR_a$, this becomes
\[
    \|\delta_c z\|^2
    =
    \|\delta_{\mathrm{cross}}z\|^2
    +
    \sum_{a=1}^{K}
    z_a^\ast R_a^\ast L_aR_a z_a.
\]
By Theorem~\ref{thm:galerkin_energy_decomposition}, the right-hand side equals
\[
    \|\delta_{\mathcal F}Rz\|^2
    =
    z^\ast R^\ast L_{\mathcal F}Rz.
\]
Therefore,
\[
    z^\ast\delta_c^\ast\delta_c z
    =
    z^\ast R^\ast L_{\mathcal F}Rz
    \qquad
    \forall z\in C_c^0.
\]
Since both $\delta_c^\ast\delta_c$ and $R^\ast L_{\mathcal F}R$ are
self-adjoint, equality of their quadratic forms for all $z$ implies
\[
    \delta_c^\ast\delta_c
    =
    R^\ast L_{\mathcal F}R.
\]
This proves the theorem.

\section{Kernel preservation in the loopless case}
\label{app:kernel_preservation_loopless}

\begin{proposition}[Kernel preservation]
\label{prop:kernel_preservation_loopless}
Assume that $\operatorname{im}R_a\subseteq\ker L_a$ for every cluster $C_a$.
Then
\begin{equation}
    z\in\ker\delta_{\mathrm{cross}}
    \quad\Longleftrightarrow\quad
    Rz\in\ker\delta_{\mathcal F}.
    \label{eq:kernel_preservation}
\end{equation}
Equivalently,
\begin{equation}
    R(\ker\delta_{\mathrm{cross}})
    =
    \ker\delta_{\mathcal F}\cap \operatorname{im}R.
    \label{eq:kernel_preservation_image}
\end{equation}
\end{proposition}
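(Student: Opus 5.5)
The plan is to reduce everything to the energy decomposition of Theorem~\ref{thm:galerkin_energy_decomposition}. Under the hypothesis $\operatorname{im}R_a\subseteq\ker L_a$, each internal term vanishes: $L_aR_az_a=0$ for every $z_a$, so $z_a^\ast R_a^\ast L_aR_az_a=0$. Equation~\eqref{eq:galerkin_energy_decomposition} therefore collapses, for every coarse cochain $z$, to
\[
    \|\delta_{\mathcal F}Rz\|^2=\|\delta_{\mathrm{cross}}z\|^2 .
\]
This is the same reasoning that gives the loopless identity~\eqref{eq:loopless_exact_realization}, used here at the level of quadratic forms.

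From this norm identity, the equivalence~\eqref{eq:kernel_preservation} is immediate. A vector lies in the kernel of a linear map exactly when its image has zero norm. Hence $\delta_{\mathrm{cross}}z=0$ holds iff $\|\delta_{\mathrm{cross}}z\|=0$, iff $\|\delta_{\mathcal F}Rz\|=0$, iff $Rz\in\ker\delta_{\mathcal F}$.

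For the image form~\eqref{eq:kernel_preservation_image}, I would prove the two inclusions separately.
\begin{itemize}
    \item If $z\in\ker\delta_{\mathrm{cross}}$, then $Rz$ lies in $\operatorname{im}R$ trivially, and $Rz\in\ker\delta_{\mathcal F}$ by the forward direction of the equivalence.
    \item Conversely, take $y\in\ker\delta_{\mathcal F}\cap\operatorname{im}R$. Write $y=Rz$ for some $z$. The backward direction of the equivalence gives $z\in\ker\delta_{\mathrm{cross}}$, so $y\in R(\ker\delta_{\mathrm{cross}})$.
\end{itemize}
Note that injectivity of $R$ is not needed for this set equality. Still, it holds in the spectral case, since $U_a$ has orthonormal columns. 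So one may additionally remark that $R$ restricts to an isomorphism $\ker\delta_{\mathrm{cross}}\cong\ker\delta_{\mathcal F}\cap\operatorname{im}R$.

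There is essentially no hard step. The only point that needs care is that the hypothesis is used pointwise, through $L_aR_a=0$, rather than only through the vanishing of the operator $R_a^\ast L_aR_a$ in Eq.~\eqref{eq:galerkin_operator_decomposition}. In fact either suffices, because the quadratic form $z_a^\ast R_a^\ast L_aR_az_a=\|\delta_aR_az_a\|^2$ is what enters the decomposition. Beyond that, one should be careful that the equivalence is stated for every $z$, not just for $z$ in a particular subspace, so that the surjectivity argument in the second inclusion goes through.
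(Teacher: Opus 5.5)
Your proof is correct and follows the paper's argument. Both split $\delta_{\mathcal F}Rz$ into the crossing-edge part $\delta_{\mathrm{cross}}z$ and the internal parts $\delta_aR_az_a$, then use $\ker L_a=\ker\delta_a$ to kill the latter; you package this as the norm identity from Theorem~\ref{thm:galerkin_energy_decomposition}, whereas the paper argues edge by edge. The edge-by-edge version makes visible that $Rz\in\ker\delta_{\mathcal F}\Rightarrow z\in\ker\delta_{\mathrm{cross}}$ needs no hypothesis (your route gives this too, since the internal terms are nonnegative), and your two-inclusion check of the image form supplies a step the paper only asserts as ``equivalent.''
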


\begin{proof}
If $z\in\ker\delta_{\mathrm{cross}}$, then all crossing-edge discrepancies of
$Rz$ vanish. Since $\operatorname{im}R_a\subseteq\ker L_a$ and
$L_a=\delta_a^\ast\delta_a$, we also have $\delta_aR_az_a=0$ for every cluster.
Thus all internal and crossing discrepancies of $Rz$ vanish, so
$Rz\in\ker\delta_{\mathcal F}$.

Conversely, if $Rz\in\ker\delta_{\mathcal F}$, then every fine-edge
discrepancy of $Rz$ is zero. In particular, all discrepancies on crossing edges
are zero. By construction of $\delta_{\mathrm{cross}}$, this implies
$z\in\ker\delta_{\mathrm{cross}}$.
\end{proof}

\end{document}